\theoremstyle{plain}
\newtheorem{thm}{Theorem}
\newtheorem{lem}{Lemma}
\newtheorem{prop2}{Proposition}
\theoremstyle{definition}
\newtheorem{defi}{Definition}
\newtheorem{ex}{Example}
\renewcommand{\tilde}{\widetilde}
\title{Approximation of Permutation Invariant Polynomials by Transformers: Efficient Construction in Column-Size}
\author{Naoki Takeshita$^\dagger$, Masaaki Imaizumi$^{\dagger \ddagger}$}
\address{$^\dagger$The University of Tokyo, \\
$^\ddagger$RIKEN Center for Advanced Intelligence Project}
\thanks{Date: \today, Contact: \url{takeshita-naoki649@g.ecc.u-tokyo.ac.jp}}
\begin{document}

\maketitle

\begin{abstract}
    Transformers are a type of neural network that have demonstrated remarkable performance across various domains, particularly in natural language processing tasks. Motivated by this success, research on the theoretical understanding of transformers has garnered significant attention. A notable example is the mathematical analysis of their approximation power, which validates the empirical expressive capability of transformers. In this study, we investigate the ability of transformers to approximate column-symmetric polynomials—an extension of symmetric polynomials that take matrices as input. Consequently, we establish an explicit relationship between the size of the transformer network and its approximation capability, leveraging the parameter efficiency of transformers and their compatibility with symmetry by focusing on the algebraic properties of symmetric polynomials.
\end{abstract}

\section{Introduction}
The Transformer, a neural network extension proposed by \cite{Vas17}, has played a central role in data-driven large language models such as the Generative Pre-trained Transformer (GPT) (\cite{Bro20}) and Bidirectional Encoder Representations from Transformers (BERT) (\cite{Dev19}), becoming a primary focus in natural language processing tasks. It is also well known for its high performance in tasks beyond natural language processing, such as image processing (\cite{Dos21}), where convolutional neural networks have traditionally dominated, thereby broadening its range of applications.

The property that neural networks and Transformers can represent a wide range of functions is called the universal approximation property and has been extensively studied. While it is widely known that neural networks can approximate any continuous function to an arbitrary degree of accuracy (e.g., \cite{Cyb89, Hor91, Yar17, Lu21}), it has also been shown that Transformers exhibit similar properties. For a representative example, \cite{Yun19} demonstrated that Transformers possess the universal approximation property, allowing them to approximate permutation equivariant functions (i.e., when the columns of the input matrix are swapped, the entries of the output matrix change in the same way) with matrix inputs.

An important challenge in Transformer theory is the rigorous investigation of its approximation efficiency. For example, in \cite{Yun19}, the number of parameters required to construct the approximating Transformer increases exponentially with the size of the input matrix, highlighting significant room for improvement in approximation efficiency.
\cite{Kaj24} reveals that a Transformer with a single attention layer is a universal approximator of permutation equivariant functions. \cite{Tak23} investigates a special class of functions on an infinite-dimensional set and shows that Transformers can approximate such functions with a number of parameters that does not depend on the infinite dimensionality. Despite these results, the efficiency of Transformers in approximating a general class of functions remains an ongoing area of research.

In this paper, we focus on approximating permutation invariant polynomials and study the approximation efficiency of Transformers. Since Transformers are permutation equivariant, it is natural to consider permutation invariant functions when approximating functions with a one-dimensional output. In addition, since analytic functions can be arbitrarily well approximated by polynomials, it makes sense to restrict the target function to polynomials, which are relatively easy to approximate. We demonstrate explicit relationships between the width and depth of the Transformer network and its approximation error. We also focus on parameter efficiency: in our approximation, the number of parameters is independent of the number of columns in the input matrices. This result contrasts with conventional neural networks, where the number of parameters increases as the input dimension increases. The proof is constructive, providing explicit parameter configurations for the approximating Transformer.

To prove the result above, we extend symmetric polynomials to matrices and focus on approximating column-symmetric polynomials. While it is known that any symmetric polynomial can be expressed as a linear combination of monomial symmetric polynomials, we utilize the analogous result that any column-symmetric polynomial can be expressed as a linear combination of monomial column-symmetric polynomials. In constructing monomial column-symmetric polynomials, we focus on their algebraic properties, particularly the number of columns involved in the polynomial terms (referred to as the \textit{rank} in this paper). By discussing these polynomials inductively based on their rank, we derive the network size required to approximate monomial column-symmetric polynomials and analyze the approximation errors.

\subsection{Related Works}
As this research concerns universal approximation, we discuss several related works on this topic.

\subsubsection{Approximation Theory of Neural Networks}
The universal approximation theorem was established by works such as \cite{Cyb89} and \cite{Hor91}. These demonstrate that single-layer neural networks with a general activation function can approximate any continuous function on compact domains to an arbitrary degree of precision. However, they focus solely on the existence of neural networks that approximate continuous functions and do not specify their exact architecture.

In recent years, the empirical success of deep neural networks in tasks such as image recognition and object detection has sparked significant interest in their representational abilities. \cite{Yar17} showed that deep neural networks can approximate smooth functions with fewer parameters than shallow ones. Further developments include the results of \cite{Lu17}, which showed that ReLU FNNs with a fixed width and arbitrary depth can approximate any Lebesgue-integrable function to an arbitrary degree of precision in the sense of \(L^1\). \cite{Lu21} proved that \(C^s\)-functions can be uniformly approximated with an error that decreases at a polynomial rate with respect to the number of layers and width. In addition, \cite{Lu21} showed that the polynomial order of the uniform approximation error is optimal, except for a logarithmic factor.

As a more applied approximation theory, \cite{schmidt2017nonparametric} clarified the approximation performance of functions with composite structures and demonstrated the suitability of neural networks. \cite{petersen2018optimal} and \cite{imaizumi2018deep,imaizumi2022advantage} analyzed the approximation rate of neural networks for non-differentiable functions and showed that neural networks with more layers can achieve better approximation rates than conventional methods.
\cite{nakada2020adaptive} and \cite{chen2019efficient} demonstrated the approximation performance of neural networks adapted to manifold structures, showing that the order of approximation error can be fully described in terms of the manifold dimension. \cite{suzuki2018adaptivity} and \cite{hayakawa2020minimax} elucidated the approximation performance in the general function space such as the Besov space, demonstrating that deep neural networks can achieve optimal approximation rate even for functions that conventional methods fail to approximate optimally.

\subsubsection{Approximation Theory of Symmetric Neural Networks}
Discussions on the symmetry of neural networks have also advanced. In tasks such as image recognition, it is often desirable for the network output to be invariant to transformations such as parallel shifts. Neural networks that are inherently symmetric can be advantageous for this reason. Additionally, imposing symmetry can reduce the number of parameters, which is particularly valuable, especially given that recent models have a significant number of parameters.  
\cite{Zah17} considered neural networks defined on sets. Since sets do not take the order of their elements into account, this can be regarded as a type of symmetric neural network in the paper. Research on symmetric neural networks has also progressed. \cite{Yar18} showed that any permutation invariant function \( f: \mathbb{R}^{d \times n} \to \mathbb{R} \) with \( d, n \in \mathbb{N} \) can be uniformly approximated to arbitrary precision on any compact set using a two-layer neural network that is permutation invariant with respect to its input columns. Additionally, \cite{Mar19} extended these results to the general case, demonstrating that functions invariant under specific permutations are universal approximators. Furthermore, \cite{San19} proved the permutation equivariant case and showed that imposing symmetry on ReLU FNNs reduces the number of parameters compared to the case without symmetry. These approaches provide an effective framework for approximating symmetric functions by leveraging inherent symmetries.

\subsubsection{Universal Approximation of Transformers}
The universal approximation theorem for Transformers, proved by \cite{Yun19}, states that any continuous permutation equivariant function on $[0,1]^{d \times n}$ can be approximated to an arbitrary precision by Transformers. Additionally, it demonstrates that if positional encoding (a method for embedding positional information into input data) is employed, the same result holds even when the target function is not permutation equivariant. \cite{Tak23} showed that specific shift equivariant functions (i.e., functions equivariant to column shifts) can be approximated by a one-layer Transformer with positional encoding. Later, \cite{Kaj24} showed that by directly using the softmax function—in contrast to \cite{Yun19}, which used the hardmax function—a Transformer with a single-head attention layer serves as a universal approximator for permutation equivariant continuous functions. Moreover, several other universal approximation theorems have been established, considering different cases. \cite{Zah20} demonstrated that Transformers with sparse attention layers are universal approximators while reducing computational complexity in the attention layers. \cite{Yun20} investigated a more general case of universal approximation with sparse attention layers. \cite{Kra22} examined universal approximation under constraints, where the outputs of both the target function and the approximating Transformer lie within a specific convex set.

\subsubsection{Approximation Efficiency of Neural Networks and Transformers}

There are studies related to the expressivity of neural networks and Transformers beyond universal approximation.
Let $d$ and $n$ be the input dimension and sequence length of a Transformer (i.e., the number of input tokens).
\cite{Bho20} proved that certain matrices cannot be expressed as the output of the softmax function in the attention layer of Transformers when $d < n$.
\cite{Lik21} demonstrated that the number of columns required to approximate sparse matrices is significantly lower than the total number of columns.

Another topic related to efficiency is memorization capacity, which focuses on fitting $N$ input-output pairs.
\cite{Par21} showed that ReLU FNNs with $\tilde{O}(N^{2/3})$ parameters can memorize $N$ pairs, where $\tilde{O}(\cdot)$ is Landau's Big-O notation which omits constants and logarithmic factors.
\cite{Var22} improved the rate to $\tilde{O} (\sqrt{N})$, which is optimal when ignoring logarithmic factors, according to \cite{Gol93}.
The case for Transformers was shown in \cite{Kim23}, showing that Transformers with $\tilde{O} (d + n + \sqrt{dN})$ parameters can memorize $N$ input-output mappings, where the inputs belong to $\mathbb{R}^{d\times n}$.

\subsection{Notation}
\label{sec1.3}
We denote matrices by uppercase boldface letters, such as $\boldsymbol{A} \in \mathbb{R}^{d \times n}$, and vectors by lowercase boldface letters, such as $\boldsymbol{x} = (x_1, \dots, x_d)^\top \in \mathbb{R}^d$. Vector and matrix addition is defined element-wise. Let $\boldsymbol{O}_{d\times n} \in \mathbb{R}^{d\times n}$ denote the zero matrix and $\boldsymbol{1}_{d\times n}$ denote the $d\times n$ matrix where all entries are equal to 1. Block matrices are represented as  
$
\boldsymbol{A} = \begin{pmatrix} \boldsymbol{A}_{11} & \boldsymbol{A}_{12} \\ \boldsymbol{A}_{21} & \boldsymbol{A}_{22} \end{pmatrix}
$  
and $[\boldsymbol{A}]_i$ represents the $i$-th column of $\boldsymbol{A}$.  
For any positive integer $n$ and for any vectors $\boldsymbol{x} = (x_1, \dots, x_n) \in \mathbb{R}^n$ and $\boldsymbol{\alpha} = (\alpha_1, \dots, \alpha_n) \in \mathbb{N}^n$, we define  
$
\boldsymbol{x}^{\boldsymbol{\alpha}} \coloneqq x_1^{\alpha_1} x_2^{\alpha_2} \cdots x_n^{\alpha_n},
$  
and  
$
|\boldsymbol{x}| \coloneqq \lVert \boldsymbol{x} \rVert_1 = |x_1| + |x_2| + \dots + |x_n|,
$  
which represents the degree of the monomial $\boldsymbol{x}^{\boldsymbol{\alpha}}$.
We compare vectors $\boldsymbol{x} = (x_1, \dots, x_n)$ and $\boldsymbol{y} = (y_1, \dots, y_n)$ based on lexicographical order: that is, $\boldsymbol{x} < \boldsymbol{y}$ if $x_1 < y_1$; otherwise, if $x_1 = y_1$, the comparison is determined by the values of $x_2$ and $y_2$; if $x_2 = y_2$, the process continues with $x_3$ and $y_3$, and so on.
Let $S_n$ be the set of all permutations of $(1,2,\dots,n)$.

\subsection{Paper Organization}
Section \ref{sec2} introduces the basic concepts used in this study, followed by the main theorem in Section \ref{sec3}. The proof of the main theorem is constructed in several steps. First, we approximate column-wise monomials in Section \ref{sec4}. Next, in Section \ref{sec4.4}, we construct rank-1 monomial column-symmetric polynomials by summing the column-wise monomials. Then, in Section \ref{sec5}, we inductively approximate rank-$r\ (\geq 2)$ monomial column-symmetric polynomials based on their ranks. Finally, in Section \ref{sec6}, we complete the approximation of column-symmetric polynomials to prove the main theorem. In Section \ref{sec7}, we provide further discussion on this study. Additionally, some basic properties are proved in the appendix.

\section{Preliminaries}
\label{sec2}
In this section, we define important concepts, such as symmetric polynomials and the Transformer, to state the main theorem in Section \ref{sec3}.

\subsection{Neural Networks and Transformers}
First, we introduce feed-forward neural networks (FNNs), a typical type of neural network. Here, we consider the Rectified Linear Unit (ReLU) activation function, which is defined as $\mathrm{ReLU}: \mathbb{R}^d \to \mathbb{R}^d: (x_1, \dots, x_d)^\top \mapsto (\max(0, x_1), \dots, \max(0, x_d))^\top$. FNNs take vectors as inputs and return real numbers as outputs. Strictly speaking, a ReLU FNN is defined as follows.

\begin{defi}[ReLU feed-forward neural network]
    Fix a number $L \in \mathbb{N}_+$ and $d_0,...,d_{L+1} \in \mathbb{N}_+$, where $d_{L+1} = 1$.
    For the input $\boldsymbol{x}_0 \in \mathbb{R}^{d_0}$, a ReLU FNN is a function which returns $\mathrm{NN}(\boldsymbol{x}_0) = \boldsymbol{y}_L \in \mathbb{R}$, which is described as follows:
    a sequence $\{\boldsymbol{y}_i = (y_{i1, \dots, id_i})^\top \in \mathbb{R}^{d_{i+1}}\}_{i=0}^L$  is defined by the recursive manner for $i = 0, 1, \dots, L$:
    \begin{equation*}
        \boldsymbol{y}_{i} = \boldsymbol{W}_i \boldsymbol{x}_i + \boldsymbol{b}_i,
    \end{equation*}
    where $\boldsymbol{x}_{i} \in \mathbb{R}^{d_i}$ is defined as 
    \begin{equation*}
        \boldsymbol{x}_{i} = \mathrm{ReLU} (\boldsymbol{y}_{i-1})\quad \text{for }i=0,1,...,L-1.
    \end{equation*}
    Here, $\boldsymbol{W}_i \in \mathbb{R}^{d_{i+1}\times d_{i}}$ and $\boldsymbol{b}_i \in \mathbb{R}^{d_{i+1}}$ are parameter matrices and bias vectors, respectively.
\end{defi}

$N = \max (d_1, \dots, d_L)$ is referred to as the \textit{width} of $\mathrm{NN}(\boldsymbol{x}_0)$, and $L$ as its \textit{depth}. The vectors $\boldsymbol{x}_1, \dots, \boldsymbol{x}_L$ are referred to as the hidden layers of $\mathrm{NN}(\boldsymbol{x}_0)$. The vector $\boldsymbol{x}_{i+1}$ is obtained by applying an affine transformation followed by the ReLU activation function to $\boldsymbol{x}_i$ for $i = 0,1,\dots,L-1$. Note that the activation function is not applied when obtaining the final output $\boldsymbol{y}_L$ from the last hidden layer $\boldsymbol{x}_L$. An example of a ReLU FNN is illustrated in Figure \ref{fig1}.
\begin{figure}[htbp]
    \centering
    \includegraphics[width=0.9\linewidth]{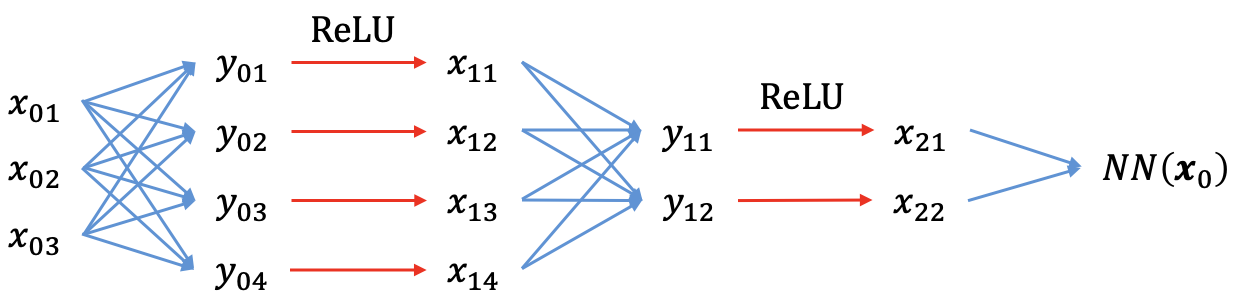}
    \caption{An Example of a ReLU FNN with width $N=4$ and depth $L=2$.}
    \label{fig1}
\end{figure}

Next, we introduce the Transformer, an extension of the FNN. A Transformer receives matrices and returns matrices of the same size. It is constructed by repeatedly combining \textit{Transformer Blocks}, each of which consists of two layers: an attention layer and a feed-forward layer. More precisely, we define the Transformer as follows:

\begin{defi}[Transformer]
    Fix $h,d,n,m,k \in \mathbb{N}_+$.
    For any matrix $\boldsymbol{X} \in \mathbb{R}^{d\times n}$, we define an \textit{attention layer} $\mathrm{Attn}: \mathbb{R}^{d\times n} \rightarrow \mathbb{R}^{d\times n}$ with width $d$ and $h$ heads as 
    \begin{align*}
        \mathrm{Attn} (\boldsymbol{X}) \coloneqq & \boldsymbol{X} + 
        \sum_{i=1}^h \boldsymbol{W}_O^i \boldsymbol{W}_V^i \boldsymbol{X} \cdot \mathrm{softmax} \left[ (\boldsymbol{W}_K^i \boldsymbol{X})^\top \boldsymbol{W}_Q^i \boldsymbol{X} \right],
    \end{align*}
    where $\boldsymbol{W}_O^i \in \mathbb{R}^{d\times m}, \boldsymbol{W}_V^i, \boldsymbol{W}_K^i, \boldsymbol{W}_Q^i \in \mathbb{R}^{m\times d}$ are parameter matrices for $i=1,...,h$.
    Here, the softmax function $\mathrm{softmax}:\mathbb{R}^{n \times n} \to \mathbb{R}^{n \times n} $ is applied column-wise: i.e. for a matrix $\boldsymbol{A} \in \mathbb{R}^{n \times n}$, we define
    \begin{equation*}
        \mathrm{softmax}(\boldsymbol{A})= \mathrm{softmax} \left(  
        \begin{bmatrix}
            a_{11} & a_{12} & \cdots & a_{1n} \\
            a_{21} & a_{22} & \cdots & a_{2n} \\
            \vdots & \vdots & \ddots & \vdots \\
            a_{n1} & a_{n2} & \cdots & a_{nn} 
        \end{bmatrix}
        \right) \coloneqq
        \begin{bmatrix}
            e^{-a_{11}} / s_1 & e^{-a_{12}} / s_2 & \cdots & e^{-a_{1n}} / s_n \\
            e^{-a_{21}} / s_1 & e^{-a_{22}} / s_2 & \cdots & e^{-a_{2n}} / s_n \\
            \vdots & \vdots & \ddots & \vdots \\
            e^{-a_{n1}} / s_1 & e^{-a_{n2}} / s_2 & \cdots & e^{-a_{nn}} / s_n
        \end{bmatrix} ,
    \end{equation*}
    where $s_i = e^{-a_{1i}} + e^{-a_{2i}} + \dots + e^{-a_{di}}$.
    Next, a \textit{feed-forward layer} $ \mathrm{FF}: \mathbb{R}^{d\times n} \rightarrow \mathbb{R}^{d\times n}$ in a matrix form is defined 
    \begin{align*}
        \mathrm{FF} (\boldsymbol{X}) \coloneqq & \boldsymbol{X} + \boldsymbol{W}_2 \cdot \mathrm{ReLU} (\boldsymbol{W}_1 \boldsymbol{X} + \boldsymbol{b}_1 \boldsymbol{1}_n^\top) +  \boldsymbol{b}_2 \boldsymbol{1}_n^\top,
    \end{align*}
    where $\boldsymbol{W}_1^\top, \boldsymbol{W}_2 \in \mathbb{R}^{d\times r}$ are parameter matrices and $\boldsymbol{b}_1 \in \mathbb{R}^{r}, \boldsymbol{b}_2 \in \mathbb{R}^{d})$ are the bias vectors.
    Finally, a \textit{Transformer block} $ \mathrm{TB}: \mathbb{R}^{d\times n} \rightarrow \mathbb{R}^{d\times n}$ is defined as 
    \begin{align*}
        \mathrm{TB} (\boldsymbol{X}) \coloneqq & \mathrm{FF}(\mathrm{Attn}(\boldsymbol{X})).
    \end{align*}
    A function $\mathrm{TF}$ obtained by composing $\mathrm{TB}(\cdot)$ $k$ times is referred to as a \textit{Transformer network} of width $d$ and depth $k$.
\end{defi}

This definition omits layer normalization unlike \cite{Vas17}, for brevity. However, We denote that this does not affect the expressivity of the Transformer.
Figure \ref{fig:tf} illustrates the architecture of Transformers.

\begin{figure}[htbp]
    \centering
    \includegraphics[width=0.25\linewidth]{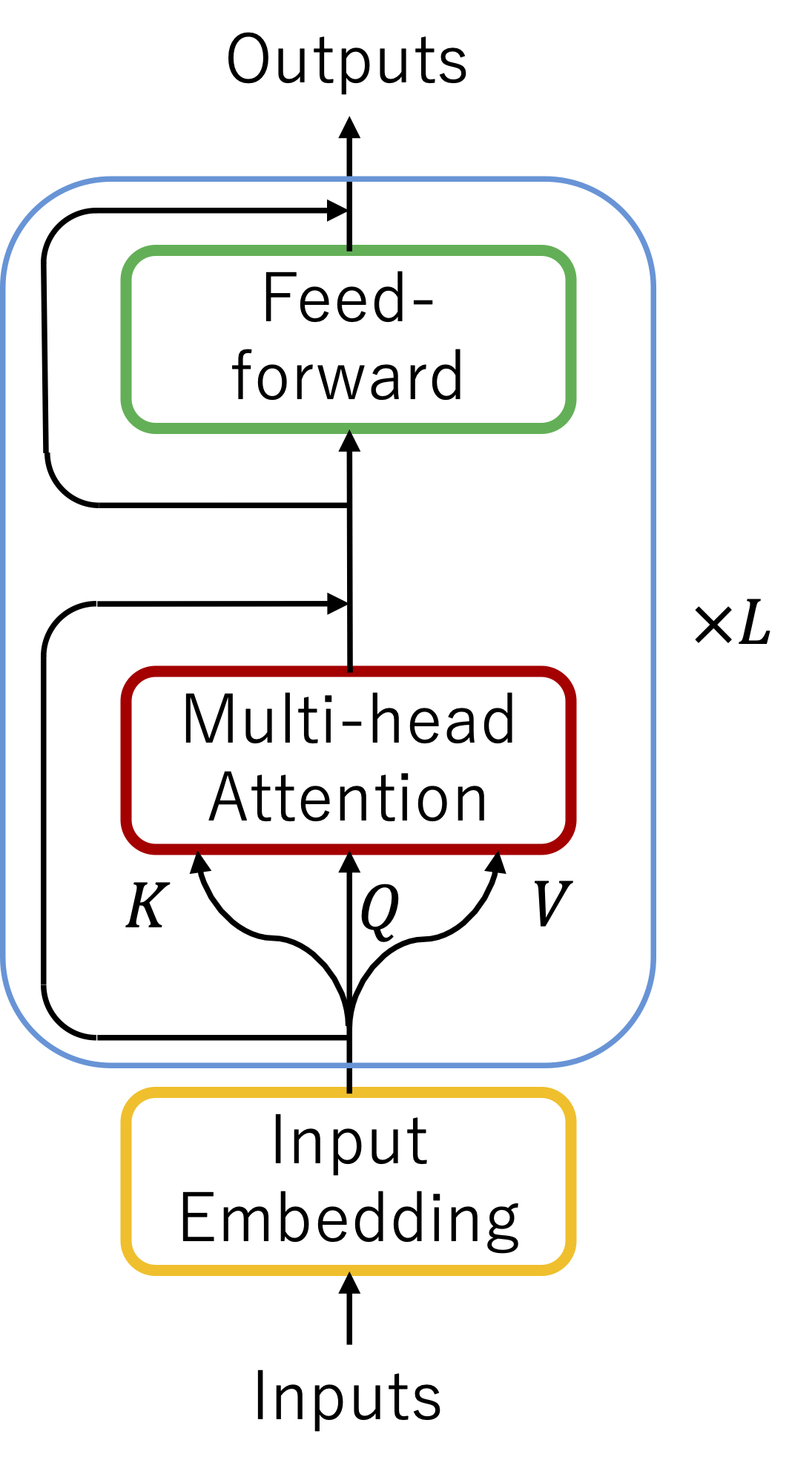}
    \caption{Architecture of Transformers}
    \label{fig:tf}
\end{figure}

\subsection{Symmetric Polynomials}
\label{sec2.2}

In this section, we define symmetric polynomials where the input is a vector, and then consider the general case where the input is a matrix.

\begin{defi}
    A function $f: \mathbb{R}^n \to \mathbb{R}^m$ is \textit{permutation invariant} if $f(x_{\sigma(1)}, x_{\sigma(2)}, \dots, x_{\sigma(n)}) = f(x_{1}, x_{2}, \dots, x_{n})$ holds for any permutation $(\sigma(1), \sigma(2), \dots, \sigma(n)) \in S_n$. In particular, if a polynomial with degree $s$ (the degree of the polynomial refers to the highest degree of the terms. e.g., the polynomial $x_1^2x_2 + x_3$ is a degree-3 polynomial.) is permutation invariant, we call it a \textit{symmetric degree-$s$ polynomial}. For example, $x_1^2 x_2 x_3 + x_2^2 x_1 x_3 + x_3^2 x_1 x_2 + 4(x_1 + x_2 + x_3)$ is a symmetric polynomial over $\boldsymbol{x} = (x_1, x_2, x_3)$. However, $f(x_1, x_2, x_3) = x_1^2 + x_2 + x_3$ is not a symmetric polynomial as $f(x_2, x_3, x_1) = x_2^2 + x_3 + x_1 \neq f(x_1, x_2, x_3)$.
\end{defi}

Next, we consider the matrix input case and define column-symmetric polynomials over matrices. 

\begin{defi}
    A function $f(\boldsymbol{X}): \mathbb{R}^{d\times n} \rightarrow \mathbb{R}^{d' \times n'}$ with some $d' \in \mathbb{N}_+$ is \textit{column permutation invariant} if $f(\boldsymbol{x}_{\sigma(1)}, \boldsymbol{x}_{\sigma(2)}, \dots, \boldsymbol{x}_{\sigma(n)}) = f(\boldsymbol{x}_1, \boldsymbol{x}_2, \dots, \boldsymbol{x}_n)$ holds for any permutation $\sigma \in S_n$. In particular, we call a column permutation invariant degree-$s$ polynomial as a \textit{column-symmetric degree-$s$ polynomial}. 
\end{defi}

\section{Universal Approximation of Column-Symmetric Polynomials}
\label{sec3}
The main statement of this study demonstrates the relationship between the width and depth of the Transformer network and its approximation error. Since the approximating Transformer has only a single attention head, and its width and depth are independent of the number of input columns (denoted as \( n \)), the number of parameters is also independent of the number of input columns.

\begin{thm}
\label{th1}
    Let $f(\boldsymbol{X})$ be an arbitrary degree-$s$ column-symmetric polynomial over $[0,1]^{d\times n}$ with positive coefficients, satisfying $\lVert f\rVert_{L^{\infty}} \leq 1$: i.e. $\max_{\boldsymbol{X}\in [0,1]^{d\times n}} |f(\boldsymbol{X})| \leq 1$. Then, for any $N, L \in \mathbb{N}_{+}$, there exists a 1-head Transformer network: $\mathrm{TF}$ with width at most $12\cdot (2d)^s N$, and depth at most $2sL + 3s$, which satisfies
    \begin{equation*}
        \max_{\boldsymbol{X} \in [0,1]^{d\times n}} |f(\boldsymbol{X}) - \mathrm{TF}(\boldsymbol{X})| < 8^s \cdot N^{-L},
    \end{equation*}
    which has only a single attention head.
\end{thm}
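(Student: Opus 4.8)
The proof is constructive, and I would organize it around an induction on the \emph{rank} of column-symmetric monomials, in four stages.

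\emph{Algebraic reduction.} Just as every symmetric polynomial is a linear combination of monomial symmetric polynomials, I would first establish that any degree-$s$ column-symmetric polynomial $f$ with positive coefficients can be written as $f=\sum_{\lambda}c_\lambda M_\lambda$ with $c_\lambda>0$, where each $M_\lambda$ is a \emph{monomial column-symmetric polynomial}: the sum, over all choices of $r_\lambda$ pairwise distinct columns, of a fixed product $\prod_{t=1}^{r_\lambda}\boldsymbol{x}_{j_t}^{\boldsymbol{\alpha}_t}$ of column-wise monomials with $\sum_t|\boldsymbol{\alpha}_t|\le s$ and each $|\boldsymbol{\alpha}_t|\ge 1$. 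The integer $r_\lambda\le s$ is the rank. The number of such $M_\lambda$, and of the column-wise monomials $\boldsymbol{x}\mapsto\boldsymbol{x}^{\boldsymbol{\alpha}}$ with $|\boldsymbol{\alpha}|\le s$ that occur, is bounded by a quantity depending only on $d$ and $s$, conveniently of order $(2d)^s$; this is what governs the width. The one consequence used everywhere afterward is that each $M_\lambda$ is nonnegative and coordinatewise nondecreasing on $[0,1]^{d\times n}$, so $\max M_\lambda=M_\lambda(\boldsymbol{1}_{d\times n})$ and hence $\sum_\lambda c_\lambda M_\lambda(\boldsymbol{1}_{d\times n})=f(\boldsymbol{1}_{d\times n})\le\lVert f\rVert_{L^\infty}\le 1$. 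Since $M_\lambda(\boldsymbol{1}_{d\times n})$ is of order $n^{r_\lambda}$, this forces the $c_\lambda$ to be correspondingly small, which is the mechanism that makes the final error independent of $n$.

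\emph{Column-wise monomials and the rank-$1$ blocks.} Using the standard ReLU construction for iterated multiplication on $[0,1]$ (width $\sim N$, depth $\sim L$, accuracy $\sim N^{-L}$ per multiplication), a stack of feed-forward layers — which act identically and in parallel on every column of the Transformer state — computes all column-wise monomials $\boldsymbol{x}_j^{\boldsymbol{\alpha}}$ of degree $\le s$ by increasing degree; this costs width $\lesssim(2d)^sN$ and depth $\lesssim sL$, and a degree-$s$ monomial is obtained to accuracy $\lesssim 2^sN^{-L}$ since each multiplication of bounded factors at most doubles the error. One attention layer with $\boldsymbol{W}_K=\boldsymbol{W}_Q=\boldsymbol{0}$ makes the softmax the uniform column-stochastic matrix $\tfrac1n\boldsymbol{1}_{n\times n}$, so choosing $\boldsymbol{W}_V,\boldsymbol{W}_O$ to read off a row and rescale by the scalar $n$ converts the per-column value $\boldsymbol{x}_j^{\boldsymbol{\alpha}}$ into the rank-$1$ block $\sum_{j=1}^n\boldsymbol{x}_j^{\boldsymbol{\alpha}}$, replicated in every column. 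Only the width and depth, not the parameter values, are required to be $n$-independent, so the constant $n$ in $\boldsymbol{W}_O$ is permitted.

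\emph{The inductive rank step.} For $r=2,\dots,s$ I would build each rank-$r$ block from blocks of strictly smaller rank via the matrix analogue of the Newton/inclusion–exclusion identities: multiplying a rank-$1$ block $\sum_j\boldsymbol{x}_j^{\boldsymbol{\alpha}_r}$ by a rank-$(r-1)$ block $\sum_{j_1,\dots,j_{r-1}}\prod_{t<r}\boldsymbol{x}_{j_t}^{\boldsymbol{\alpha}_t}$ produces a sum over \emph{all} $r$-tuples, which equals the desired distinct-tuple sum $M_\lambda$ plus a linear combination of lower-rank monomial column-symmetric polynomials of the same total degree obtained by merging the repeated column index — and these were already built, or are among the degree-$\le s$ column-wise monomials of the previous stage. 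Each rank level thus costs one feed-forward multiplication layer (depth $\lesssim L$) plus a bounded number ($\lesssim 3$) of attention/copy layers for the summation and for subtracting the correction terms, giving extra depth $\lesssim(s-1)(L+3)$, so in total depth $\le 2sL+3s$; the factor $12$ in the width absorbs the copies of previously built blocks that must be carried alongside the new ones. Crucially, every intermediate block is rescaled by the appropriate power of $n$ to keep it in a fixed bounded range, which both keeps the next feed-forward multiplication within its domain of validity and keeps the error at most doubling per level, so after $\le s$ levels each normalized block is approximated to accuracy $\lesssim 8^sN^{-L}$. A final feed-forward layer forms $\sum_\lambda c_\lambda M_\lambda$, and the total error is $\sum_\lambda c_\lambda|\widehat{M_\lambda}-M_\lambda|\lesssim 8^sN^{-L}\sum_\lambda c_\lambda M_\lambda(\boldsymbol{1}_{d\times n})=8^sN^{-L}f(\boldsymbol{1}_{d\times n})\le 8^sN^{-L}$ after tuning constants, because dividing each block by $\sim M_\lambda(\boldsymbol{1}_{d\times n})$ to normalize it turns $c_\lambda$ times the bounded error into $\lesssim c_\lambda M_\lambda(\boldsymbol{1}_{d\times n})\cdot 8^sN^{-L}$, and the weights sum to at most $1$.

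I expect the technical heart to be the inductive rank step: pinning down the column-merging identity precisely, realizing ``sum over distinct tuples'' with attention — which inherently yields sums over \emph{all} tuples and so demands explicit subtraction of the merged lower-rank terms — and, simultaneously, choosing the powers-of-$n$ normalizations so that every feed-forward multiplication sees inputs only in its domain of validity and so that the accumulated errors telescope against $\sum_\lambda c_\lambda M_\lambda(\boldsymbol{1}_{d\times n})\le 1$ rather than against the much larger uncancelled constants. Carrying all of this out uniformly in $n$ while staying within the width bound $12\cdot(2d)^sN$ and depth bound $2sL+3s$ is where essentially all of the effort lies.
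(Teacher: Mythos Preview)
Your proposal is correct and follows essentially the same route as the paper: decompose $f$ into monomial column-symmetric polynomials, build all column-wise monomials in parallel via feed-forward layers, use one $W_K=W_Q=0$ attention layer to produce the rank-$1$ power sums, climb in rank via the product/merging identity $m_{\boldsymbol{p}_1,\dots,\boldsymbol{p}_{r+1}}=m_{\boldsymbol{p}_1,\dots,\boldsymbol{p}_r}\cdot m_{\boldsymbol{p}_{r+1}}-\sum_i m_{\dots,\boldsymbol{p}_i+\boldsymbol{p}_{r+1},\dots}$, and telescope the final error against $\sum_\lambda c_\lambda M_\lambda(\boldsymbol 1)=f(\boldsymbol 1)\le 1$.

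Two small points where the paper is sharper than your sketch. First, you invoke ``attention/copy layers'' at every rank level, but the paper needs attention only \emph{once}: after the single averaging attention, all rank-$1$ blocks $m_{\boldsymbol p}$ sit in the state (the paper stores them in an appended $(n{+}1)$-st column), and the entire inductive rank step---including the subtraction of the merged correction terms, which are themselves rank-$r$ blocks already computed---is carried out purely by feed-forward layers. Your extra attention layers are harmless for the stated bounds but unnecessary. Second, your ``doubling per multiplication'' heuristic for the column-wise monomials overcounts: since one factor is always an exact input $x_{ij}\in[0,1]$, the error grows only \emph{additively}, giving $(|\boldsymbol p|-1)N^{-L}$ rather than $2^{|\boldsymbol p|}N^{-L}$; the paper then tracks the rank-step error as $\bigl(P(n{+}r{-}1,r)\prod_i(|\boldsymbol p_i|{+}1)-n^r\bigr)N^{-L}$ and only at the very end divides by $M_\lambda(\boldsymbol 1)=P(n,r)$ to extract the $8^s$ constant. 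Your coarser accounting still lands inside the stated bound, so this is a matter of bookkeeping rather than a gap.
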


Since the coefficients in \( f(\boldsymbol{X}) \) are positive, \( \lVert f \rVert_{L^{\infty}} \), the maximum value of \( f \) in \( [0,1]^{d\times n} \), is equal to \( f(\boldsymbol{1}_{d\times n}) \), which is the sum of all coefficients appearing in \( f(\boldsymbol{X}) \). Hence, as long as the sum of the coefficients does not exceed an absolute constant, the approximation error remains independent of the number of rows \( d \) and columns \( n \) of the input matrix.

\subsection{Examples}
Here, we demonstrate some examples to make our main statement more familiar.

\begin{ex}
    Let $f_1(\boldsymbol{X})$ be the polynomial consisting of all terms of degree at most $s$, with the coefficient of every term being equal to 1. For example, for the case when $d=n=s=2$, $f_1(\boldsymbol{X})$ is equivalent to 
    \begin{equation*}
    \begin{split}
        & x_{11} + x_{12} + x_{21} + x_{22} + x_{11}^2 + x_{12}^2 + x_{21}^2 + x_{22}^2 \\
        &+  x_{11}x_{12} + x_{11}x_{21} + x_{11}x_{22} + x_{12}x_{21} + x_{12}x_{22} + x_{21}x_{22}.
    \end{split}
    \end{equation*}
    As the number of terms (terms can be written in the format of $x_{11}^{p_{11}} \dots x_{dn}^{p_{dn}}$) which appear in $f_1(\boldsymbol{X})$ is at most the number of sets of integers $(p_{11}, \dots, p_{dn})$ which satisfy 
    \begin{equation*}
        p_{11} + \dots + p_{dn} \leq s,\quad p_{11}, \dots, p_{dn} \geq 0,
    \end{equation*}
    it follows from Lemma \ref{lm2} that the value of this equation is at most
    \begin{equation*}
        \begin{pmatrix} dn + s \\ s \end{pmatrix} = \frac{dn+s}{s} \cdot \frac{dn+s-1}{s-1} \dots \frac{dn+1}{1} \leq (dn+1)^s,
    \end{equation*}
    implying $|f_1(\boldsymbol{1}_{d\times n})| \leq (dn+1)^s$. Thus, there exists a Transformer network with width at most $12\cdot (2d)^s N = 16N$, and depth at most $2sL + 3s = 4L+6$, which approximates $f_1(\boldsymbol{X})$ as
    \begin{align*}
        \max_{\boldsymbol{X} \in [0,1]^{d\times n}} |f_1(\boldsymbol{X}) - \mathrm{TF}(\boldsymbol{X})| < (8(dn+1))^s \cdot N^{-L}= 1600 N^{-L}.
    \end{align*}
\end{ex}

\begin{ex}
    We consider the case when $d = 2, n = 3, s = 4$ and 
    \begin{align*}
        f_2(\boldsymbol{X}) &= \frac{1}{9} (x_{11}^2 (x_{12}x_{22} + x_{13}x_{23}) + x_{12}^2 (x_{11}x_{21} + x_{13}x_{23}) \\
        &\quad + x_{13}^2 (x_{11}x_{21} + x_{12}x_{22}) + x_{11} + x_{12} + x_{13}).
    \end{align*}
    Since substituting $\boldsymbol{X}$ with $\boldsymbol{1}_{2\times 3}$ yields $f_2(\boldsymbol{X}) = 1$, there exists a Transformer network with width at most $12\cdot 4^4 N = 3072 N$, and with depth at most $2sL + 3s = 8L + 12$, which approximates $f_1(\boldsymbol{X})$ with an error of 
    \begin{align*}
        \max_{\boldsymbol{X} \in [0,1]^{d\times n}} |f_2(\boldsymbol{X}) - \mathrm{TF}(\boldsymbol{X})| < 8^s \cdot N^{-L}= 4096N^{-L}.
    \end{align*}
    By considering the average of all the terms in $f(\boldsymbol{X})$, the approximation error does not depend on the rows $d$ and columns $n$ of the input matrix.
\end{ex}

\section{Proof Outline}

We present an outline of the proof of the main theorem.
In preparation, we define the notation of an approximation error with a sign.
For a function $f$ and a Transformer $\mathcal{T}$, we denote 
\begin{align*}
    \mathcal{E}_{\mathcal{T}} (f(x)) \coloneqq \mathcal{T}(x)- f(x)
\end{align*}
as the signed approximation error of $f$ by $\mathcal{T}$ at point $x$.
Note that this definition can be negative.
$|\mathcal{E}_{\mathcal{T}}(f(x))|$ the approximation error without a sign.

\subsection{Monomial Column-symmetric Polynomials}
We define a certain class of symmetric polynomials called monomial column-symmetric polynomials. These polynomials play a crucial role in our proof, as we approximate column-symmetric polynomials by taking a weighted sum of monomial column-symmetric polynomials.

In the case where \( d = 1 \), monomial column-symmetric polynomials coincide with monomial symmetric polynomials, which are defined as follows:
\begin{defi}[Monomial symmetric polynomials]
    A monomial symmetric polynomial is a symmetric polynomial which can be written as the sum of permutations of a single term. since 
    and $x_1^2 x_2 x_3 + x_2^2 x_1 x_3 + x_3^2 x_1 x_2$ is a monomial symmetric polynomial over $x_1, x_2, x_3$ as $x_1^2 x_2 x_3, x_2^2 x_1 x_3, x_3^2 x_1 x_2$ are all permutations of $x_1^2 x_2 x_3$. Similarly, $x_1 + x_2$ is a monomial symmetric polynomial over $x_1, x_2$.
\end{defi}

Next, we consider the general case. For \( d \geq 2 \), we define a more complex polynomial by summing the column-permutations of a specific monomial. We provide its rigorous definition as follows:
\begin{defi}[Monomial column-symmetric polynomials]
    Fix $r \in \mathbb{N}$ and $\boldsymbol{p}_1, \dots, \boldsymbol{p}_r \in \mathbb{N}^{d}$.
    We define a \textit{rank-$r$ monomial column-symmetric polynomials} $m_{\boldsymbol{p}_1, \dots, \boldsymbol{p}_r} (\boldsymbol{X})$
    over $\boldsymbol{X} = (\boldsymbol{x}_1, \dots, \boldsymbol{x}_n) \in \mathbb{R}^{d\times n}$ as 
    \begin{equation*}
    m_{\boldsymbol{p}_1, \dots, \boldsymbol{p}_r} (\boldsymbol{X}) \coloneqq
    \begin{cases}
        \displaystyle  \sum_{\sigma \in S_n} \frac{1}{(n-r)!} \boldsymbol{x}_{\sigma(1)}^{\boldsymbol{p}_1} \dots \boldsymbol{x}_{\sigma(r)}^{\boldsymbol{p}_r} \quad &\text{if}\ r \leq n,\\
        0\quad &\text{if}\ r > n.
    \end{cases}
    \end{equation*}
\end{defi}

In the $d=1$ case, the rank of monomial column-symmetric polynomials corresponds to the degree of the polynomial, as column-symmetric polynomials are equivalent to symmetric polynomials in this case. For the case of $d \geq 2$, the rank corresponds to the number of columns that each term spans.

We note that there are $(n-r)!$ permutations in $S_n$ where $(\sigma(1), \dots, \sigma(r))$ are identical. Hence, $\frac{1}{(n-r)!} \boldsymbol{x}_{\sigma(1)}^{\boldsymbol{p}_1} \dots \boldsymbol{x}_{\sigma(r)}^{\boldsymbol{p}_r}$ is equivalent to the sum of $\boldsymbol{x}_{\sigma(1)}^{\boldsymbol{p}_1} \dots \boldsymbol{x}_{\sigma(r)}^{\boldsymbol{p}_r}$, where $(\sigma(1), \dots, \sigma(r))$ are distinct. In addition, the coefficients of each term may not necessarily be equal to $1$, as terms that become identical through permutations can be counted multiple times (e.g. $x_2 x_3 x_1$ and $x_1 x_2 x_3$ are permutations of the same monomial). For a monomial column-symmetric polynomial that includes the term $\boldsymbol{x}_1^{\boldsymbol{p}_1} \boldsymbol{x}_2^{\boldsymbol{p}_2} \dots \boldsymbol{x}_n^{\boldsymbol{p}_n}$, the corresponding coefficient is $i_1! i_2!\dots i_m!$, where the tuple $(\boldsymbol{p}_1, \boldsymbol{p}_2, \dots, \boldsymbol{p}_n)$ consists of $i_1$ occurrences of '$\boldsymbol{p}_{j_1}$', $i_2$ occurrences of '$\boldsymbol{p}_{j_2}$', $\dots$, and $i_m$ occurrences of '$\boldsymbol{p}_{j_m}$', with $\boldsymbol{p}_{j_1}, \boldsymbol{p}_{j_2}, \dots, \boldsymbol{p}_{j_m}$ being distinct. For example, when $d=1$ and $n=5$, the tuple $(\boldsymbol{p}_1, \boldsymbol{p}_2, \boldsymbol{p}_3, \boldsymbol{p}_4, \boldsymbol{p}_5) = (1,1,1,2,2)$ contains three '1's and two '2's, and hence $i_1 ! i_2! = 3! \cdot 2! = 12$.

We give several examples of the monomial column-symmetric polynomial.
\begin{ex}
    For the case when $d = 2, n = 3$, the rank-$2$ monomial column-symmetric polynomial $m_{(1,0), (1,0)} (\boldsymbol{X})$ is given by
    \begin{equation*}
    \begin{split}
        m_{(1,0), (1,0)} (\boldsymbol{X}) = & x_{11}x_{12} + x_{12}x_{13}  + x_{12}x_{11} + x_{12}x_{13} + x_{13}x_{11} + x_{13}x_{12}\\
        = & 2(x_{11}x_{12} + x_{11}x_{13} + x_{12}x_{13}).
    \end{split}
    \end{equation*}
    In this case, the coefficients of the terms are all equal to 2. This is because the tuple $((1,0), (1,0))$ has two '$(1,0)$'s.
\end{ex}

\begin{ex}
    For the case when $d = 2, n = 3$ and the rank-$2$ monomial column-symmetric polynomial $m_{(1,1), (1,0)} (\boldsymbol{X})$ is given by
    \begin{align*}
        &m_{(1,1), (1,0)} (\boldsymbol{X}) \\
        & = x_{11}x_{21}x_{12} + x_{11}x_{21}x_{13} + x_{12}x_{22}x_{11} + x_{12}x_{22}x_{13} + x_{13}x_{23}x_{11} + x_{13}x_{23}x_{12}.
    \end{align*}
    The following Figure \ref{fig2} is a corresponding illustration.
\end{ex}

\begin{figure}[htbp]
    \centering
    \includegraphics[width=0.95\linewidth]{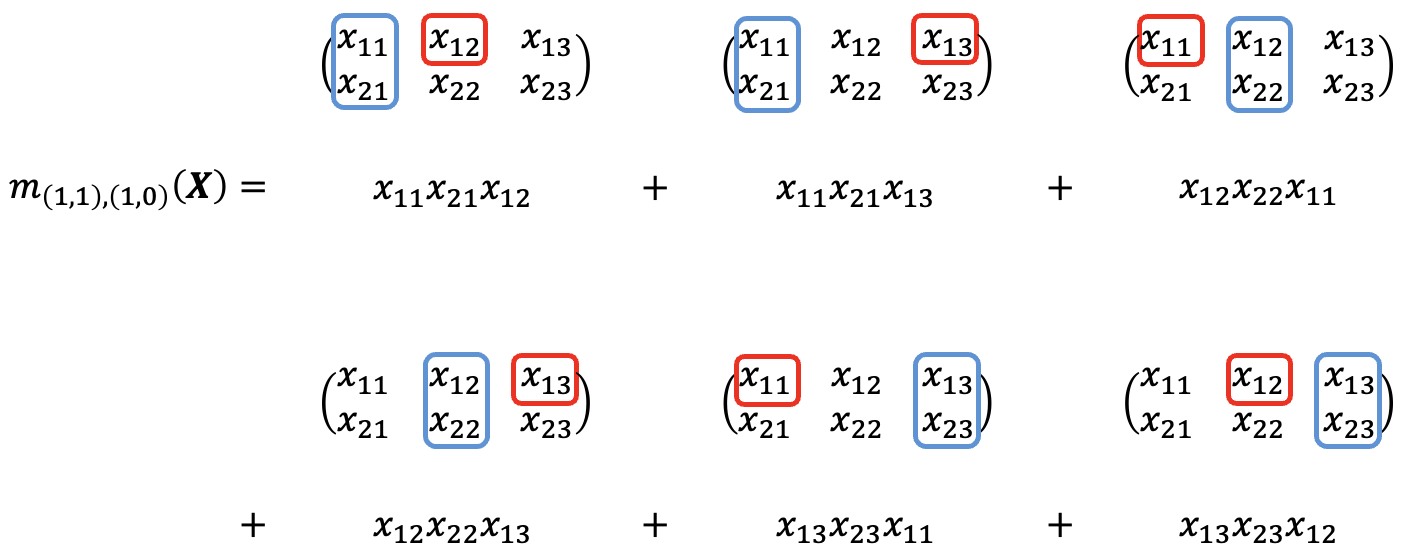}
    \caption{The illustration of $m_{(1,1),(1,0)}(\boldsymbol{X})$}
    \label{fig2}
\end{figure}

\subsection{Summary of the Proof of Theorem \ref{th1}}
From Section \ref{sec4} to Section \ref{sec6} below, we construct the approximation of $f(\boldsymbol{X})$ in multiple steps, as illustrated in Table \ref{tab1}. 

\begin{table}[htbp]
    \centering
    \begin{tabular}{c|c|c}
        Target of Approximation & Analysis & Layer used \\ \hline \hline
        $\phi(x,y) = xy$ & Section \ref{sec4.1} & Feed-forward \\ \hline
        $\psi(\boldsymbol{x}_1) = x_{11}^{p_{11}} \dots x_{d1}^{p_{d1}}$ & Section \ref{sec4.2}, \ref{sec4.3} & Feed-forward \\ \hline
        \begin{tabular}{c}
        Rank-$1$ monomial \\column-symmetric polynomial
        \end{tabular}
         & Section \ref{sec4.4} & Single-head Attention \\ \hline
        \begin{tabular}{c}
        Rank-$r$ monomial \\column-symmetric polynomial
        \end{tabular}
        & Section \ref{sec5} & Feed-forward\\ \hline
        $f(\boldsymbol{X})$ & Section \ref{sec6} & Feed-forward \\ 
    \end{tabular}
    \caption{Target functions and sections where its approximations are constructed}
    \label{tab1}
\end{table}

First, we approximate multiplication, specifically $\phi(x,y) = xy$ for $x,y \in [0,1]$. By repeatedly applying this approximation, we can approximate column-wise monomials, such as $\psi (\boldsymbol{x}_1) = x_{11}^{p_{11}} \dots x_{d1}^{p_{d1}}$. Using this method, we prove Proposition \ref{prop0}, which is demonstrated in Section \ref{sec4}.

\begin{prop2}
\label{prop0}
    There exists a feed-forward network (i.e. a Transformer network consisted only of feed-forward layers) $\mathrm{TF}_0$, whose width is at most $12sd^sN$ and depth at most $(s-1)(L+1)$, which approximates all monomials over $\boldsymbol{x}_1$, i.e. 
    \begin{equation*}
        \mathrm{TF}_0
        \begin{pmatrix}
            x_{11} \\ \vdots \\ x_{1d} \\ 0 \\ 0 \\ \vdots \\ 0
        \end{pmatrix}
        \sim 
        \begin{bmatrix}
            x_{11} \\ \vdots \\ x_{1d} \\ x_{11}^2\\ x_{11}x_{21}\\ \vdots \\ x_{d1}^s
        \end{bmatrix} .
    \end{equation*}
\end{prop2}

Second, we take the column-wise sum of these approximations of monomials, obtaining approximations of rank-$1$ monomial column-symmetric polynomials. The following proposition is proved in Section \ref{sec4.4}.

\begin{prop2}
\label{prop2}
    There exists a single-attention Transformer network $\mathrm{TF}_1$, whose width is at most $12sd^sN$ and depth at most $(s-1)(L+1) + 1$, which satisfies
    \begin{equation*}
        \left[ \mathrm{TF_1}
        \begin{pmatrix}
            x_{11} & x_{12} & \cdots & x_{1n} & 0\\
            \vdots & \vdots & \ddots & \vdots & 0\\
            x_{d1} & x_{d2} &\cdots & x_{dn} & 0\\
            0 & 0 & \cdots & 0 & 0\\
            0 & 0 & \cdots & 0 & 0\\
            \vdots & \vdots & \ddots & \vdots & \vdots\\
            0 & 0 & \cdots & 0 & 0\\
        \end{pmatrix}
        \right]_{n+1} = 
        \begin{bmatrix}
            m_{(1,0,\dots,0)}(\boldsymbol{X}) + \mathcal{E}_{\mathrm{TF}_1} (m_{(1,0,\dots,0)} (\boldsymbol{X}))\\
            \vdots \\
            m_{(0,\dots,0,1)}(\boldsymbol{X}) + \mathcal{E}_{\mathrm{TF}_1} (m_{(0,\dots,0,1)} (\boldsymbol{X}))\\
            m_{(2,0,\dots,0)}(\boldsymbol{X}) + \mathcal{E}_{\mathrm{TF}_1} (m_{(2,0,\dots,0)} (\boldsymbol{X}))\\
            m_{(1,1,0,\dots,0)}(\boldsymbol{X}) + \mathcal{E}_{\mathrm{TF}_1} (m_{(1,1,0\dots,0)} (\boldsymbol{X}))\\
            \vdots\\
            m_{(0,\dots,0,s)}(\boldsymbol{X}) + \mathcal{E}_{\mathrm{TF}_1} (m_{(0,\dots,0,s)} (\boldsymbol{X}))\\
        \end{bmatrix} ,
    \end{equation*}
    where the approximation errors $\mathcal{E}_{\mathrm{TF}_1} (m_{\boldsymbol{p}} (\boldsymbol{X}))$ satisfy
    \begin{equation*}
        |\mathcal{E}_{\mathrm{TF}_1} (m_{\boldsymbol{p}} (\boldsymbol{X}))| \leq n(|\boldsymbol{p}| - 1) N^{-L}. \quad (\boldsymbol{p} \in \mathbb{N}^d,\ 1 \leq |\boldsymbol{p}| \leq s).
    \end{equation*}
\end{prop2}

Next, we construct monomial column-symmetric polynomials of higher rank by induction on the rank. This enables us to obtain the following proposition, which is proved in Section \ref{sec5}.

\begin{prop2}
\label{prop3}
    There exists a Transformer network $\mathrm{TF}_2$ with width at most $12\cdot (2d)^s N$ and depth at most $(s-1)(L+2)$, such that 
    \begin{align*}
        &\mathrm{TF}_2
        \begin{pmatrix}
            m_{(1,0,\dots,0)} + \mathcal{E}_{\mathrm{TF}_1} (m_{(0,0,\dots,1)}) (\boldsymbol{X})\\
            \vdots\\
            m_{(0,\dots,0,1)} + \mathcal{E}_{\mathrm{TF}_1} (m_{(0,0,\dots,1)}) (\boldsymbol{X})\\
            m_{(2,0,\dots,0)} + \mathcal{E}_{\mathrm{TF}_1} (m_{(2,0,\dots,0)}) (\boldsymbol{X})\\
            \vdots\\
            m_{(0,\dots,0,s)} + \mathcal{E}_{\mathrm{TF}_1} (m_{(2,0,\dots,0)}) (\boldsymbol{X})\\
            0\\ \vdots \\ 0
        \end{pmatrix}\\
        &=
        \begin{bmatrix}
            m_{(1,0,\dots,0)} (\boldsymbol{X}) + \mathcal{E}_{\mathrm{TF}_2} (m_{(1,0,\dots,0)}) (\boldsymbol{X})\\
            \vdots\\
            m_{(0,\dots,0,1)} + \mathcal{E}_{\mathrm{TF}_2} (m_{(0,0,\dots,1)}) (\boldsymbol{X})\\
            m_{(2,0,\dots,0)} + \mathcal{E}_{\mathrm{TF}_2} (m_{(2,0,\dots,0)}) (\boldsymbol{X})\\
            \vdots\\
            m_{(0,\dots,0,s)} + \mathcal{E}_{\mathrm{TF}_2} (m_{(0,\dots,0,s)}) (\boldsymbol{X})\\
            m_{(1,0,\dots,0), (1,0,\dots,0)} + \mathcal{E}_{\mathrm{TF}_2} (m_{(1,0,\dots,0), (1,0,\dots,0)}) (\boldsymbol{X})\\
            \vdots \\
            m_{(0,\dots,0,1), \dots, (0,\dots,0,1)} + \mathcal{E}_{\mathrm{TF}_2} (m_{(0,\dots,0,1), \dots, (0,\dots,0,1)}) (\boldsymbol{X})
        \end{bmatrix} ,
    \end{align*}
    where
    \begin{equation*}
        |\mathcal{E}_{\mathrm{TF}_2} (m_{\boldsymbol{p}_1, \dots, \boldsymbol{p}_r}) (\boldsymbol{X})| \leq (P(n+r-1, r) \cdot (|\boldsymbol{p}_1| + 1) \dots (|\boldsymbol{p}_r| + 1) - n^r) N^{-L}.
    \end{equation*}
\end{prop2}

Finally, we approximate the column-symmetric polynomial \( f(\boldsymbol{X}) \) by taking a weighted sum of monomial column-symmetric polynomials. The column-wise summation is performed by a single-head attention layer, while the other processes are conducted by the feed-forward layer.

\section{Proof of Proposition \ref{prop0}}
\label{sec4}

\subsection{Approximating Products}
\label{sec4.1}
Here, we approximate the function $\phi(x,y) = xy\ (x,y\in [0,1])$ by ReLU FNNs by the method used by \cite{Yar17} and \cite{Lu21}. First, we approximate the function $x \mapsto x^2$ for $x\in [0,1]$. Let $T_1(x)\ (x\in [0,1])$ be 
\begin{equation*}
    T_1(x) \coloneqq 
    \begin{cases}
        2x\quad \text{if } x \in [0, 0.5] \\ 2(1-x)\quad \text{if } x \in [0.5, 1]
    \end{cases}
\end{equation*}
and $T_{i+1} \coloneqq T_i\circ T_1$. Then, $T_i$ becomes the sawtooth function illustrated in Figure \ref{fig3}. 
\begin{figure}[htbp]
    \centering
    \includegraphics[width=0.6\linewidth]{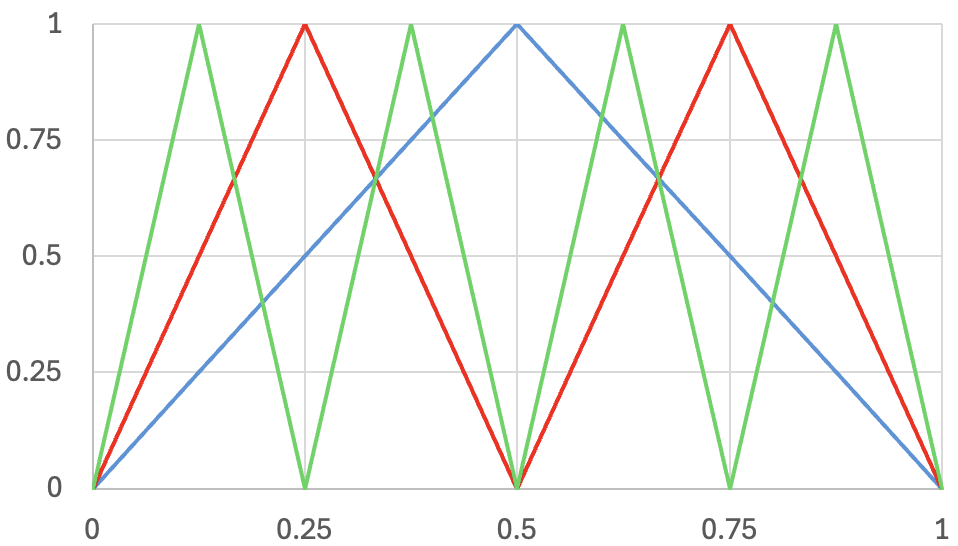}
    \caption{$T_1, T_2$ and $T_3$ are illustrated in blue, red and green respectively.}
    \label{fig3}
\end{figure}

Now, we define 
\[
\displaystyle \tilde{f}_k(x) \coloneqq x - \sum_{i=1}^k \frac{T_i(x)}{4^i}.
\]
Then, as illustrated in Figure \ref{fig4}, $\tilde{f}_{k}(x)$ approximates the target function $x \mapsto x^2$. 
\begin{figure}[htbp]
    \centering
    \includegraphics[width=0.55\linewidth]{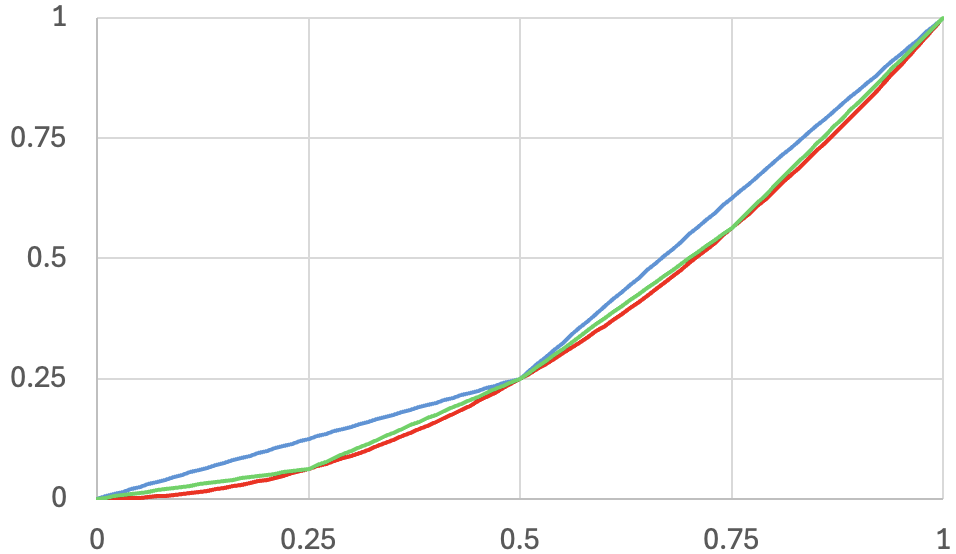}
    \caption{$\tilde{f}_1(x)$ (in blue) and $\tilde{f}_2(x)$ (in green) approximating the target function $x \mapsto x^2$ (in red)}
    \label{fig4}
\end{figure}
As a result, the following statement holds, which will be proved in \ref{sec9.2}.

\begin{lem}
\label{lm7}
    The equality 
    \begin{equation*}
        \displaystyle \tilde{f}_k(x) - x^2 = - \left( x - \frac{i}{2^k} \right) \left( x - \frac{i+1}{2^k} \right) 
    \end{equation*}
    holds for any $x \in [0,1], k \geq 1$ and $\displaystyle \frac{i}{2^k} \leq x \leq \frac{i+1}{2^k}$, where $i \in \{0,1,\dots,2^k-1 \}$. In particular, $\displaystyle 0 \leq \tilde{f}_k(x) - x^2 \leq \frac{1}{4^{k+1}}$ holds for any $x \in [0,1]$.
\end{lem}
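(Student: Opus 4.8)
The plan is to establish the explicit identity for $\tilde f_k(x) - x^2$ on each dyadic subinterval $[i/2^k,(i+1)/2^k]$ by induction on $k$, and then read off the two-sided bound as an immediate corollary. The key structural fact is that $\tilde f_k$ is the piecewise-linear interpolant of $x\mapsto x^2$ at the dyadic points $\{i/2^k: i=0,\dots,2^k\}$; once that is known, the value of $\tilde f_k(x)-x^2$ on $[i/2^k,(i+1)/2^k]$ is forced, since a linear function interpolating $x^2$ at the two endpoints $a,b$ differs from $x^2$ by exactly $-(x-a)(x-b)$ (expand: the quadratic $x^2 - (-(x-a)(x-b)) = x^2 + (x-a)(x-b)$ has the wrong leading coefficient — so instead note $x^2 + (x-a)(x-b)$ is affine and agrees with $x^2$ at $a$ and $b$, hence equals the interpolant). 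So the heart of the argument is showing $\tilde f_k$ is exactly this dyadic interpolant.

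First I would record that $T_1$ is the tent map and $T_i = T_1^{\circ i}$ is the sawtooth with $2^{i-1}$ teeth, each of height $1$, with breakpoints exactly at $\{j/2^i\}$; this is a standard and easily verified fact about iterated tent maps. Next, observe that $\tilde f_k(x) = x - \sum_{i=1}^k T_i(x)/4^i$ is continuous and piecewise linear with breakpoints contained in $\{j/2^k : j=0,\dots,2^k\}$ (each $T_i$ for $i\le k$ has breakpoints among these), so $\tilde f_k$ is determined by its values at the dyadic points $j/2^k$. The main computation is therefore to check $\tilde f_k(j/2^k) = (j/2^k)^2$ for all $j$. I would do this by induction on $k$: passing from $k$ to $k+1$ adds the term $-T_{k+1}(x)/4^{k+1}$, which vanishes at all points $j/2^k$ (the old grid) and equals $-1/4^{k+1}$ at the new midpoints $(2j+1)/2^{k+1}$; one then checks that subtracting $1/4^{k+1}$ from the linear interpolant's value at the midpoint of $[j/2^k,(j+1)/2^k]$ exactly produces $((2j+1)/2^{k+1})^2$, which is the elementary identity $\tfrac12\big((j/2^k)^2+((j+1)/2^k)^2\big) - \tfrac1{4^{k+1}} = \big((2j+1)/2^{k+1}\big)^2$. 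The base case $k=0$ (or $k=1$) is trivial.

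With $\tilde f_k$ identified as the dyadic interpolant, the displayed identity $\tilde f_k(x)-x^2 = -(x-i/2^k)(x-(i+1)/2^k)$ on $[i/2^k,(i+1)/2^k]$ follows from the interpolation remark above. Finally, on that interval the function $-(x-i/2^k)(x-(i+1)/2^k)$ is nonnegative and maximized at the midpoint, where its value is $(1/2^{k+1})^2 = 1/4^{k+1}$; this gives $0 \le \tilde f_k(x)-x^2 \le 1/4^{k+1}$ for all $x\in[0,1]$. I expect the only mildly delicate point to be the bookkeeping that $\tilde f_k$ has \emph{no} breakpoints finer than the $2^k$-grid and that each $T_i$ ($i\le k$) indeed vanishes on the coarser grid $\{j/2^k\}$ only when $i\le k$ — i.e. making the induction step clean — but this is routine once the tent-map iteration structure is spelled out; no genuine obstacle is anticipated.
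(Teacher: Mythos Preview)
Your proposal is correct and takes a genuinely different route from the paper. The paper proceeds by direct induction on the identity itself: assuming $\tilde f_k(x)-x^2=-(x-i/2^k)(x-(i+1)/2^k)$ on $[i/2^k,(i+1)/2^k]$, it writes $\tilde f_{k+1}(x)-x^2 = (\tilde f_k(x)-x^2) - T_{k+1}(x)/4^{k+1}$, invokes a preparatory lemma (Lemma~\ref{lm4}) giving the explicit local form $T_{k+1}(x)=T_1(2^k(x-i/2^k))$, and then carries out the quadratic algebra separately on the left and right halves of each interval. Your approach instead isolates the conceptual content---$\tilde f_k$ is the piecewise-linear interpolant of $x\mapsto x^2$ on the grid $\{j/2^k\}$---and establishes this by checking values only at grid points, after which the displayed identity and the two-sided bound are one-line consequences of the interpolation error for a quadratic. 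Your route is more transparent and explains \emph{why} the identity holds; the paper's route is more self-contained but heavier on case-by-case computation.

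Two small points. First, your parenthetical justification of the interpolation-error formula is garbled: $x^2+(x-a)(x-b)=2x^2-(a+b)x+ab$ is certainly not affine. The correct one-liner is that $x^2-(x-a)(x-b)=(a+b)x-ab$ is affine and agrees with $x^2$ at $a$ and $b$, hence equals the interpolant $L$, whence $L(x)-x^2=-(x-a)(x-b)$. Second, when you assert that $T_{k+1}$ vanishes at the old grid $\{j/2^k\}$, make this explicit as a consequence of the sawtooth structure (the zeros of $T_{k+1}$ are exactly $\{j/2^k:j=0,\dots,2^k\}$), rather than leaving it implicit in the statement about breakpoints.
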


Let \( k \) be the integer satisfying \( 2^k \leq N < 2^{k+1} \). Then, from Lemma \ref{lm4}, it is easy to verify that \( T_i(x) \) is piecewise linear in the interval \( \frac{j}{2^i} \leq x \leq \frac{j+1}{2^i} \) for \( j = 0,1,\dots,2^i-1 \). Hence, \( T_i(x) \) can be constructed by a ReLU FNN with width \( 2^i \) and depth $1$. 
Note that the composition of a depth-\( L_1 \) ReLU FNN, \( \phi_1 \), and a depth-\( L_2 \) ReLU FNN, \( \phi_2: \mathbb{R} \rightarrow \mathbb{R} \), can be achieved with a depth-\( L_1 + L_2 \) ReLU FNN, since obtaining the output of \( \phi_1 \) from its last hidden layer is merely an affine transformation, which can be combined with the first affine transformation of \( \phi_2 \). 
As a result, \( \tilde{f}_{Lk}(x) \) can be approximated by the ReLU FNN illustrated in Figure \ref{fig5}, which has a width of at most \( 2 + \dots + 2^k + 1 = 2^{k+1} - 1 < 2N \) and a depth of \( L \).

\begin{figure}[htbp]
    \centering
    \includegraphics[width=0.9\linewidth]{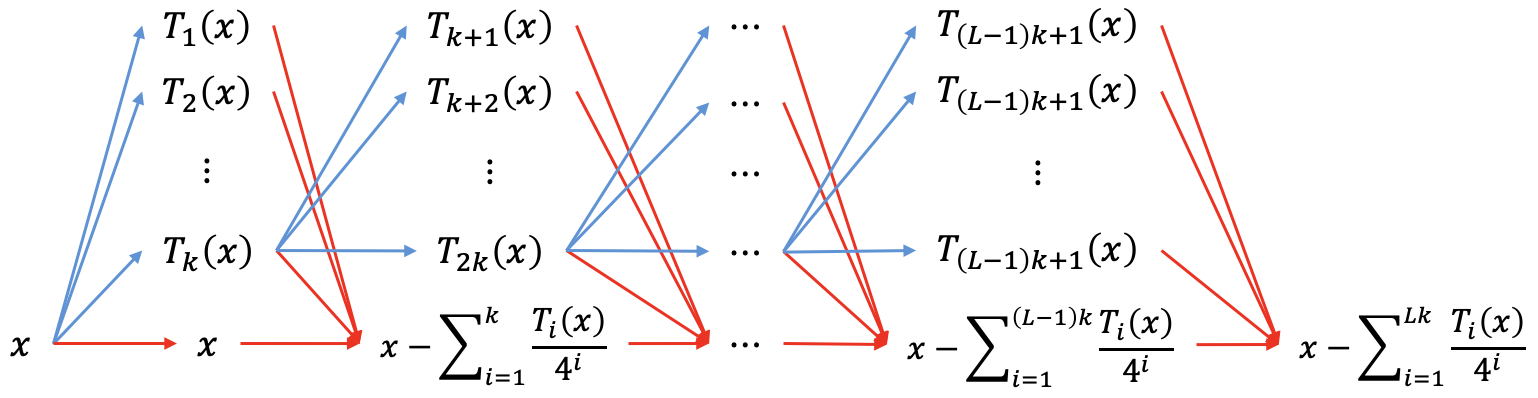}
    \caption{The illustration of $T_{Lk}$, where the blue and red lines show $T_1, \dots, T_k$ and affine transformations respectively.}
    \label{fig5}
\end{figure}

By considering that $xy = 2\left(\frac{x+y}{2}\right)^2 - \frac{1}{2}x^2 - \frac{1}{2}y^2$, the function $\tilde{g}_{Lk}(x,y) \coloneqq 2\tilde{f}_{Lk}(\frac{x+y}{2}) - \frac{1}{2} \tilde{f}_{Lk}(x) - \frac{1}{2} \tilde{f}_{Lk}(y)\ (x,y\in [0,1])$ is an approximation of $\phi (x,y) = xy$, which can be approximated by a ReLU FNN with width at most $6N$, and depth at most $L$, as illustrated in Figure \ref{fig6}. Combining this with the following lemma, $\tilde{g}_{Lk}$ can be approximated by a Transformer network with width at most $12N$, and depth at most $L$. The proof is in \ref{sec9.3}.

\begin{figure}[htbp]
    \centering
    \includegraphics[width=0.6\linewidth]{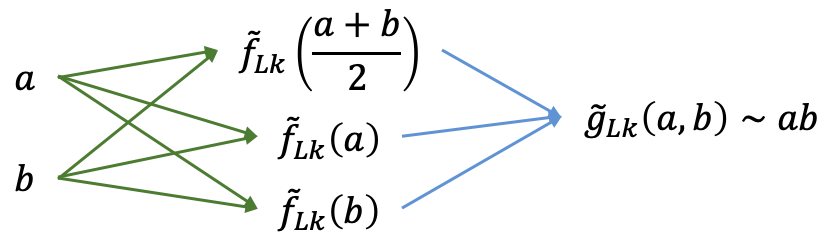}
    \caption{The composition of $g_{Lk}$, where the green and blue lines show $f_{Lk}$ and affine transformations respectively.}
    \label{fig6}
\end{figure}

\begin{lem}
\label{lm5}
    A ReLU FNN with width $N$ and depth $L$, where all the values of inputs, outputs and hidden layers are all non-negative, can be constructed by a Transformer network with width $2N$ and depth $L$.
\end{lem}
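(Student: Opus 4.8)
The plan is to simulate a ReLU FNN inside a Transformer network by using the feed-forward layers only, exploiting the fact that a Transformer's feed-forward layer $\mathrm{FF}(\boldsymbol{X}) = \boldsymbol{X} + \boldsymbol{W}_2\,\mathrm{ReLU}(\boldsymbol{W}_1\boldsymbol{X} + \boldsymbol{b}_1\boldsymbol{1}_n^\top) + \boldsymbol{b}_2\boldsymbol{1}_n^\top$ has a \emph{residual connection} that the bare FNN does not. Since the columns of the input are processed independently and identically by $\mathrm{FF}$, it suffices to track a single column, i.e.\ to show that a vector-to-vector map $\boldsymbol{x} \mapsto \mathrm{NN}(\boldsymbol{x})$ computed by a width-$N$, depth-$L$ ReLU FNN can be realized by composing $L$ feed-forward layers on $\mathbb{R}^{2N}$ (the attention layers are set to the identity by taking $\boldsymbol{W}_V^i = \boldsymbol{O}$). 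The doubling of the width is what is needed to neutralize the residual term.

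The key idea is to carry, in each hidden state, both the "live" activations of the FNN layer being simulated and a running copy of whatever the residual stream forces upon us, stored in a disjoint block of coordinates, so that it can be subtracted off. Concretely, after the $i$-th feed-forward layer I would arrange the state of the Transformer to be the block vector $(\boldsymbol{x}_i^\top, \boldsymbol{0}^\top)^\top \in \mathbb{R}^{2d_i}$ (padding with zeros up to width $2N$), where $\boldsymbol{x}_i = \mathrm{ReLU}(\boldsymbol{y}_{i-1})$ is the $i$-th hidden layer of $\mathrm{NN}$. In one Transformer feed-forward step I want to produce $(\boldsymbol{x}_{i+1}^\top, \boldsymbol{0}^\top)^\top$ from $(\boldsymbol{x}_i^\top, \boldsymbol{0}^\top)^\top$. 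The residual connection returns $\boldsymbol{x}_i$ (in the first block) unchanged, so inside the $\mathrm{ReLU}(\boldsymbol{W}_1\cdot + \boldsymbol{b}_1)$ part I compute two things: $\mathrm{ReLU}(\boldsymbol{W}_i \boldsymbol{x}_i + \boldsymbol{b}_i) = \boldsymbol{x}_{i+1}$ into the first block, and a verbatim copy $\mathrm{ReLU}(\boldsymbol{x}_i) = \boldsymbol{x}_i$ (valid since $\boldsymbol{x}_i \geq \boldsymbol{0}$ by the non-negativity hypothesis) into the second block; then $\boldsymbol{W}_2$ writes $+\boldsymbol{x}_{i+1}$ to the first block and $-\boldsymbol{x}_i$ to... no — more cleanly, $\boldsymbol{W}_2$ adds $(\boldsymbol{x}_{i+1} - \boldsymbol{x}_i)$ to the first block (cancelling the residual $\boldsymbol{x}_i$ and installing $\boldsymbol{x}_{i+1}$) and adds $\boldsymbol{0}$ to the second block, using the copy of $\boldsymbol{x}_i$ precisely to form the $-\boldsymbol{x}_i$ correction. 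This needs $2N$ hidden units in the $\mathrm{ReLU}$ of the feed-forward layer and width $2N$ for the state, matching the claim. The final affine read-out $\boldsymbol{y}_L = \boldsymbol{W}_L \boldsymbol{x}_L + \boldsymbol{b}_L$ (which in the FNN has no ReLU) is handled by one more feed-forward layer in the same residual-cancellation style, or folded into the last step; since we are allowed depth $L$ and the FNN's output map is affine, one checks the bookkeeping closes with exactly $L$ Transformer blocks.

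The main obstacle — really the only subtlety — is managing the residual stream: every Transformer sublayer adds its input back, whereas an FNN layer overwrites. The non-negativity hypothesis on all inputs, outputs, and hidden values is exactly what makes the fix cheap: it lets a single extra ReLU faithfully copy the current state into a scratch block, which is then used to synthesize the $-\boldsymbol{x}_i$ term that cancels the unwanted residual. (Without non-negativity one would need the standard $\boldsymbol{x} = \mathrm{ReLU}(\boldsymbol{x}) - \mathrm{ReLU}(-\boldsymbol{x})$ trick and hence a factor closer to $3$ or $4$ in width; here $2$ suffices.) I would also note that the attention layers contribute nothing: setting $\boldsymbol{W}_V^i = \boldsymbol{O}_{m \times d}$ makes $\mathrm{Attn}$ the identity regardless of the softmax output, so the interleaved attention layers in the Transformer block do not interfere. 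Assembling these per-layer gadgets and verifying the width ($2N$) and depth ($L$) bounds completes the proof.
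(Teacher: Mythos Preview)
Your proposal is correct and follows essentially the same approach as the paper: set the attention layers to the identity, double the width to $2N$, and exploit the non-negativity hypothesis to neutralize the feed-forward layer's residual connection using the extra $N$ coordinates. The paper's gadget differs cosmetically in that it alternates which of the two $N$-blocks holds the current FNN activation (zeroing the other via $\mathrm{ReLU}(-\boldsymbol{x}_i)=\boldsymbol{0}$), whereas you keep the live activation always in the first block and use $\mathrm{ReLU}(\boldsymbol{x}_i)=\boldsymbol{x}_i$ to form the subtracted copy; both devices rest on the same non-negativity assumption and yield the same width and depth bounds.
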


\subsection{Approximation of Polynomials}
\label{sec4.2}

We construct a neural network to approximate polynomials.
Since $f_{Lk}(x)$ constructed in Section \ref{sec4.1} is convex and $0\leq f_{Lk}(x) \leq 1$ holds for any $x \in [0,1]$, 
\begin{equation*}
\begin{split}
    \tilde{g}(x,y) = & 2\tilde{f}_{Lk}\left(\frac{x+y}{2}\right) - \frac{1}{2} \tilde{f}_{Lk}(x) - \frac{1}{2}\tilde{f}_{Lk}(y)\\
    = & \tilde{f}_{Lk} \left(\frac{x+y}{2}\right) - \left( \frac{1}{2} \left( \tilde{f}_{Lk}(x) + \tilde{f}_{Lk}(y) \right) - \tilde{f}_{Lk}\left(\frac{x+y}{2}\right) \right) \\
    \leq & \tilde{f}_{Lk} \left(\frac{x+y}{2}\right) \leq 1
\end{split}
\end{equation*}
holds for any $0\leq x,y\leq 1$. Hence, $\tilde{h}_{Lk} (x,y) \coloneqq \mathrm{ReLU}(\tilde{g}_{Lk} (x,y))$ satisfies $0 \leq \tilde{h}_{Lk} (x,y) \leq 1$, and by applying $\tilde{h}_{Lk} (x,y)$ repeatedly, we can obtain an approximation of $\psi (\boldsymbol{x}_1) = x_{11}^{p_{11}} \dots x_{d1}^{p_{d1}}$. As degree $i$ mononomials of $x_{11}, \dots, x_{1d}$ can be written in the format $x_{11}^{p_{11}} \dots x_{d1}^{p_{d1}}$ where $p_{11} + \dots + p_{d1} = i$, the total number of such polynomials is at most
\begin{equation*}
    \binom{i+d-1}{d-1} = \binom{i+d-1}{i} = \frac{i+d-1}{i} \cdot \frac{i+d-2}{i-1} \dots \frac{d}{1} \leq d^i.
\end{equation*}
This implies that the number of monomials of degree $s$ or less of $x_{i1}, \dots, x_{id}$ are at most $d + d^2 + \dots + d^s \leq sd^s$. Note that constructing a monomial of degree $s$ or less requires at most $s-1$ multiplications and $\tilde{h}_{Lk} (x,y)$ has width $12N$ and depth $L+1$, due to the extra layer applying the ReLU function. By approximating all the single-term polynomials in $\boldsymbol{x}_1$, $\mathrm{TF}_0$, the Transformer network approximating all monomials over $\boldsymbol{x}_1$ with degree $s$ or less, can be constructed with width at most $sd^s \cdot 12N = 12sd^sN$, and depth at most $(s-1)(L+1)$.

\subsection{Approximation Error}
\label{sec4.3}
We study an approximation error of the neural networks constrcuted above.

First, we evaluate the approximation error of $\tilde{g}_{Lk}(x,y) \sim xy$. Since 
\begin{equation*}
    \tilde{g}_{Lk}(x,y) - xy = \left(f_{Lk} \left(\frac{x+y}{2} \right) - \left(\frac{x+y}{2}\right)^2 \right) - \frac{1}{2} (\tilde{f}_{Lk}(x) - x^2) - \frac{1}{2} (\tilde{f}_{Lk}(y) - y^2),
\end{equation*}
and $0 \leq \tilde{f}_{Lk} - x^2 \leq \frac{1}{4^{Lk+1}}$ holds for all $x\in [0,1]$, the value of $\tilde{g}_{Lk}(x,y) - xy$ must be at least $\displaystyle 0 - \frac{1}{2} \cdot \frac{1}{4^{Lk+1}} - \frac{1}{2} \cdot \frac{1}{4^{Lk+1}} = -\frac{1}{4^{Lk}+1}$ and be at most $2\displaystyle \cdot \frac{1}{4^{Lk+1}} - 0 - 0 = \frac{1}{2^{2Lk+1}}$. Since we defined $k$ so that $2^k \leq N < 2^{k+1}$ holds, we obtain 
\begin{equation*}
    |\tilde{g}_{Lk}(x,y) - xy| \leq \frac{1}{2^{2Lk+1}} = \frac{1}{2} \left( \frac{1}{2^{2k}} \right)^L \leq \frac{1}{2} \left( \frac{1}{2^{k+1}} \right)^L < \left( \frac{1}{N} \right)^L, 
\end{equation*}
and $xy \geq 0$ for $x,y\in [0,1]$ yields
\begin{equation*}
    |\tilde{h}_{Lk}(x,y) - xy| = |\mathrm{ReLU}(\tilde{g}_{Lk}(x,y)) - xy| \leq |\tilde{g}_{Lk}(x,y) - xy| < N^{-L}.
\end{equation*}

Next, we discuss the approximation error of column-wise monomials. Specifically, we show that the approximation error of a degree-$j\ (\leq s)$ monomial of $\boldsymbol{x}_1$ is at most $(j-1)N^{-L}$, by induction. 

The case for $j=1$ is obvious, because we do not need to take products. Assume the hypothesis is true for degree-$j\ (< s)$. In this case, $\mathcal{E}_{\mathrm{TF}_0} (x_{11}^{p_{11}} \dots p_{d1}^{p_{d1}})$, the approximation error of $x_{11}^{p_{11}} \dots x_{d1}^{p_{d1}}$ by $\mathrm{TF}_0$ satisfies $|\mathcal{E}_{\mathrm{TF}_0} (x_{11}^{p_{11}} \dots x_{d1}^{p_{d1}})| \leq (j-1)N^{-L}$. And because $x_{11}, \dots, x_{d1}\in [0,1]$, the approximation error of $x_{11}^{p_{11}} \dots x_{d1}^{p_{d1}} \cdot x_{i1}$ is at most
\begin{equation*}
\begin{split}
    & |\tilde{g}_{Lk} (x_{11}^{p_{11}} \dots x_{d1}^{p_{d1}} + \mathcal{E}_{\mathrm{TF}_0} (x_{11}^{p_{11}} \dots p_{d1}^{p_{d1}}), x_{i1}) - x_{11}^{p_{11}} \dots x_{d1}^{p_{d1}} x_{i1}| \\
    &\leq  |\tilde{g}_{Lk} (x_{11}^{p_{11}} \dots x_{d1}^{p_{d1}} + \mathcal{E}_{\mathrm{TF}_0} (x_{11}^{p_{11}} \dots p_{d1}^{p_{d1}}), x_{i1})  \\
    & \qquad - (x_{11}^{p_{11}} \dots x_{d1}^{p_{d1}} + \mathcal{E}_{\mathrm{TF}_0} (x_{11}^{p_{11}} \dots p_{d1}^{p_{d1}})) \cdot x_{i1}|\\ 
    & \quad +  |(x_{11}^{p_{11}} \dots x_{d1}^{p_{d1}} + \mathcal{E}_{\mathrm{TF}_0} (x_{11}^{p_{11}} \dots p_{d1}^{p_{d1}})) \cdot x_{i1} - x_{11}^{p_{11}} \dots x_{d1}^{p_{d1}} \cdot x_{i1}|\\
    &\leq  N^{-L} + |\mathcal{E}_{\mathrm{TF}_0} (x_{11}^{p_{11}} \dots p_{d1}^{p_{d1}})| \leq N^{-L} + (j-1)N^{-L} = j \cdot N^{-L},
\end{split}
\end{equation*}
which completes the induction. Hence, the approximation error of $m_{\boldsymbol{p}_1} (\boldsymbol{X}) = \boldsymbol{x}_1^{\boldsymbol{p}_1} + \dots + \boldsymbol{x}_n^{\boldsymbol{p}_1}$ by $\mathrm{TF}_1$ is at most $n(|\boldsymbol{p}_1| - 1) N^{-L}$, proving Proposition \ref{prop2}.

\section{Proof of Proposition \ref{prop2}}
\label{sec4.4}
Since feed-forward layers affect each columns independently, 
\begin{equation}
\label{eq1}
    \mathrm{TF}_0
    \begin{pmatrix}
        x_{11} & x_{12} & \cdots & x_{1n} \\
        \vdots & \vdots & \ddots & \vdots \\
        x_{d1} & x_{d2} & \cdots & x_{dn} \\
        0 & 0 & \cdots & 0 \\
        0 & 0 & \cdots & 0 \\
        \vdots & \vdots & \ddots & \vdots \\
        0 & 0 & \cdots & 0 \\
    \end{pmatrix}
    \sim  
    \begin{bmatrix}
        x_{11} & x_{12} & \cdots & x_{1n} \\
        \vdots & \vdots & \ddots & \vdots \\
        x_{d1} & x_{d2} & \cdots & x_{dn} \\
        x_{11}^2 & x_{12}^2 & \cdots & x_{1n}^2 \\
        x_{11}x_{21} & x_{12}x_{22} & \cdots & x_{1n}x_{2n} \\
        \vdots & \vdots & \ddots & \vdots\\
        x_{d1}^s & x_{d2}^s & \cdots & x_{dn}^s \\
    \end{bmatrix}
\end{equation}
holds: i.e. the monomials of $\boldsymbol{x}_1, \boldsymbol{x}_2, \dots, \boldsymbol{x}_n$ are simultaneously produced in the $1,2,\dots,n$th columns respectively. Let $d'$ be the number of rows in $\mathrm{TF}_0$. By setting $m = d',\ h = 1,\ \boldsymbol{W}_O^1 = I_{d'},\ \boldsymbol{W}_V^1 = (n+1) \boldsymbol{I}_{d'},\ \boldsymbol{W}_K^1 = \boldsymbol{W}_Q^1 = \boldsymbol{O}^{d'}$ in the attention layer, we obtain 
\begin{equation*}
    \begin{split}
        \mathrm{Attn} \left( \boldsymbol{Y} = 
        \begin{bmatrix}
        y_{11} & y_{12} & \cdots & y_{1n} & 0\\
        y_{21} & y_{22} &\cdots & y_{2n} & 0\\
        \vdots & \vdots & \ddots & \vdots & \vdots \\
        y_{d'1} & y_{d2} & \cdots & y_{d'n} & 0\\
        \end{bmatrix}
        \right)
        &=  \boldsymbol{Y} + (n+1) \boldsymbol{Y} \cdot \mathrm{softmax} (\boldsymbol{O}_{n+1})\\
        &=  \boldsymbol{Y} + (n+1) \boldsymbol{Y} \cdot \frac{1}{n+1} \boldsymbol{1}_{n+1} \\
        &= 
        \begin{bmatrix}
        y_{11} + s_1 & y_{12} + s_1 & \cdots & y_{1n} + s_1 & s_1\\
        y_{21} + s_2 & y_{22} + s_2 &\cdots & y_{2n} + s_2 & s_2\\
        \vdots & \vdots & \ddots & \vdots & \vdots \\
        y_{d'1} + s_{d'} & y_{d'2} + s_{d'} & \cdots & y_{d'n} + s_{d'} & s_{d'}\\
        \end{bmatrix}, 
    \end{split}
\end{equation*}
where $s_i = y_{i1} + y_{i2} + \dots + y_{in}$. Hence, if we define $\mathrm{TF}_1$ as the composition of $\mathrm{TF}_0$ and the above attention layer, equation \eqref{eq1} yields
\begin{equation*}
    \mathrm{TF_1}
    \begin{bmatrix}
        x_{11} & x_{12} & \cdots & x_{1n} & 0\\
        \vdots & \vdots & \ddots & \vdots & 0\\
        x_{d1} & x_{d2} &\cdots & x_{dn} & 0\\
        0 & 0 & \cdots & 0 & 0\\
        0 & 0 & \cdots & 0 & 0\\
        \vdots & \vdots & \ddots & \vdots & \vdots\\
        0 & 0 & \cdots & 0 & 0\\
    \end{bmatrix}_{n+1}
    \sim 
    \begin{bmatrix}
        m_{(1,0,\dots,0)}(\boldsymbol{X})\\
        \vdots \\
        m_{(0,\dots,0,1)}(\boldsymbol{X})\\
        m_{(2,0,\dots,0)}(\boldsymbol{X})\\
        m_{(1,1,0,\dots,0)}(\boldsymbol{X})\\
        \vdots\\
        m_{(0,\dots,0,s)}(\boldsymbol{X})\\
    \end{bmatrix} ,
\end{equation*}
since $m_{\boldsymbol{p}_1}(\boldsymbol{X}) = \boldsymbol{x}_1^{\boldsymbol{p}_1} + \dots + \boldsymbol{x}_n^{\boldsymbol{p}_n}$ ($[\boldsymbol{A}]_{i}$ denotes the $i$-th column of matrix $\boldsymbol{A}$). Hence, we have constructed all the rank-1 monomial column-symmetric polynomials of $\boldsymbol{X}$, with degree $s$ or less. As $\mathrm{TF}_1$ is a composition of $\mathrm{TF}_0$ and a single-head attention layer, the width and depth of $\mathrm{TF}_1$ are at most $12sd^sN$ and $(s-1)(L+1) + 1$.

Since the approximation error of each degree-$j$ monomial was at most $(j-1)N^{-L}$, according to Section \ref{sec4.3}, the approximation error of $m_{\boldsymbol{p}_1} (\boldsymbol{X}) = \boldsymbol{x}_1^{\boldsymbol{p}_1} + \dots + \boldsymbol{x}_n^{\boldsymbol{p}_1}$ by $\mathrm{TF}_1$ is at most $n(|\boldsymbol{p}_1| - 1) N^{-L}$, as this polynomial is consisted of $n$ degree-$|\boldsymbol{p}_1|$ terms. This proves Proposition \ref{prop2}.

\section{Proof of Proposition \ref{prop3}}
\label{sec5}
From the following lemma, we can obtain rank-$(r+1)$ monomial column-symmetric polynomials from rank-$r$ and rank-$1$ monomial column-symmetric polynomials, by addition and multiplication. 
\begin{lem}
\label{lm6}
    For $\boldsymbol{p}_1, \dots, \boldsymbol{p}_{r+1} \in \mathbb{N}^d$, the following equality holds:
    \begin{equation}
    \label{eq2}
    \begin{split}
        m_{\boldsymbol{p}_1, \dots, \boldsymbol{p}_r, \boldsymbol{p}_{r+1}}(\boldsymbol{X})
        &=  m_{\boldsymbol{p}_1, \dots, \boldsymbol{p}_r} (\boldsymbol{X}) \cdot m_{\boldsymbol{p}_{r+1}}(\boldsymbol{X}) - m_{\boldsymbol{p}_1 + \boldsymbol{p}_{r+1}, \boldsymbol{p}_2, \dots, \boldsymbol{p}_r} (\boldsymbol{X}) \\
        & \quad -  m_{\boldsymbol{p}_1, \boldsymbol{p}_2 + \boldsymbol{p}_{r+1}, \boldsymbol{p}_3, \dots, \boldsymbol{p}_r} (\boldsymbol{X})
        - \cdots - m_{\boldsymbol{p}_1, \dots \boldsymbol{p}_{r-1}, \boldsymbol{p}_r + \boldsymbol{p}_{r+1}} (\boldsymbol{X}).
    \end{split}
    \end{equation}
\end{lem}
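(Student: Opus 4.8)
The plan is to prove the identity by expanding the product on the right-hand side as a sum over ordered index tuples and then sorting the resulting monomials according to whether or not an index gets repeated.

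First I would record the combinatorial reformulation of the definition that is already noted in the remark following it: for $r \le n$,
\[
m_{\boldsymbol{p}_1,\dots,\boldsymbol{p}_r}(\boldsymbol{X}) = \sum_{(i_1,\dots,i_r)} \boldsymbol{x}_{i_1}^{\boldsymbol{p}_1}\cdots\boldsymbol{x}_{i_r}^{\boldsymbol{p}_r},
\]
where the sum runs over all ordered $r$-tuples $(i_1,\dots,i_r)$ of \emph{pairwise distinct} elements of $\{1,\dots,n\}$, since each block of $(n-r)!$ permutations sharing the same prefix $(\sigma(1),\dots,\sigma(r))$ contributes one such monomial. In particular $m_{\boldsymbol{p}_{r+1}}(\boldsymbol{X}) = \sum_{j=1}^n \boldsymbol{x}_j^{\boldsymbol{p}_{r+1}}$.

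Next, assuming first that $r+1 \le n$, I would multiply and reorganize:
\[
m_{\boldsymbol{p}_1,\dots,\boldsymbol{p}_r}(\boldsymbol{X})\cdot m_{\boldsymbol{p}_{r+1}}(\boldsymbol{X}) = \sum_{(i_1,\dots,i_r)\ \mathrm{distinct}}\ \sum_{j=1}^n \boldsymbol{x}_{i_1}^{\boldsymbol{p}_1}\cdots\boldsymbol{x}_{i_r}^{\boldsymbol{p}_r}\,\boldsymbol{x}_j^{\boldsymbol{p}_{r+1}}.
\]
I would then split the inner sum according to the position of $j$. If $j\notin\{i_1,\dots,i_r\}$, then $(i_1,\dots,i_r,j)$ ranges over all ordered $(r+1)$-tuples of pairwise distinct indices, so this partial sum equals $m_{\boldsymbol{p}_1,\dots,\boldsymbol{p}_{r+1}}(\boldsymbol{X})$. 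If $j = i_k$ for some $k$ (necessarily unique, since the $i_\ell$ are distinct), the monomial becomes $\boldsymbol{x}_{i_1}^{\boldsymbol{p}_1}\cdots\boldsymbol{x}_{i_k}^{\boldsymbol{p}_k+\boldsymbol{p}_{r+1}}\cdots\boldsymbol{x}_{i_r}^{\boldsymbol{p}_r}$, and for each fixed $k$ the sum over all distinct $(i_1,\dots,i_r)$ is $m_{\boldsymbol{p}_1,\dots,\boldsymbol{p}_k+\boldsymbol{p}_{r+1},\dots,\boldsymbol{p}_r}(\boldsymbol{X})$. Collecting the two cases gives
\[
m_{\boldsymbol{p}_1,\dots,\boldsymbol{p}_r}(\boldsymbol{X})\cdot m_{\boldsymbol{p}_{r+1}}(\boldsymbol{X}) = m_{\boldsymbol{p}_1,\dots,\boldsymbol{p}_{r+1}}(\boldsymbol{X}) + \sum_{k=1}^r m_{\boldsymbol{p}_1,\dots,\boldsymbol{p}_k+\boldsymbol{p}_{r+1},\dots,\boldsymbol{p}_r}(\boldsymbol{X}),
\]
and rearranging yields exactly \eqref{eq2}.

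Finally I would dispose of the degenerate ranges. If $r=n$, no index $j$ can be disjoint from a distinct $n$-tuple, so the first case is vacuous; this is consistent because $m_{\boldsymbol{p}_1,\dots,\boldsymbol{p}_{r+1}}=0$ by definition when $r+1>n$, and the rest of the computation is unchanged. If $r>n$, every monomial column-symmetric polynomial of rank $r$ or $r+1$ appearing in \eqref{eq2} is zero and the identity reads $0=0$. I do not anticipate a genuine obstacle; the only care needed is the bookkeeping of which tuples count as ``distinct'' and an explicit statement of the edge cases $r\ge n$, so that the rank-induction of Section \ref{sec5} which invokes this lemma stays within its valid range.
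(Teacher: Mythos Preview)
Your proposal is correct and follows essentially the same approach as the paper: both arguments split the extra index $j$ according to whether it is disjoint from or coincides with one of the indices $i_1,\dots,i_r$, and both handle the edge cases $r=n$ and $r>n$ separately. The only cosmetic difference is that you start from the product and expand, while the paper starts from $m_{\boldsymbol{p}_1,\dots,\boldsymbol{p}_{r+1}}$ and works backward; your use of the ordered-tuple reformulation avoids tracking the $(n-r)!$ normalizations explicitly, which is slightly cleaner.
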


\begin{proof}
    The lemma can be proved by multiplying a rank-$r$ monomial column-symmetric polynomial with a rank-$1$ monomial column-symmetric polynomial, and subtracting the extra terms. If $r+1 \leq n$,
    \begin{equation*}
    \begin{split}
        & m_{\boldsymbol{p}_1, \dots, \boldsymbol{p}_{r+1}}(\boldsymbol{X}) = \sum_{\sigma \in S_n} \frac{1}{(n-r-1)!} \boldsymbol{x}_{\sigma(1)}^{\boldsymbol{p}_1} \dots \boldsymbol{x}_{\sigma(r+1)}^{\boldsymbol{p}_{r+1}}\\
        = & \sum_{\sigma \in S_n} \frac{1}{(n-r)!} \boldsymbol{x}_{\sigma(1)}^{\boldsymbol{p}_1} \dots \boldsymbol{x}_{\sigma(r)}^{\boldsymbol{p}_{r}} \cdot (\boldsymbol{x}_1^{\boldsymbol{p}_{r+1}} + \dots + \boldsymbol{x}_n^{\boldsymbol{p}_{r+1}} - \boldsymbol{x}_{\sigma(1)}^{\boldsymbol{p}_{r+1}} - \dots - \boldsymbol{x}_{\sigma(r)}^{\boldsymbol{p}_{r+1}})\\
        = & \sum_{\sigma \in S_n} \frac{1}{(n-r)!} \left( \boldsymbol{x}_{\sigma(1)}^{\boldsymbol{p}_1} \dots \boldsymbol{x}_{\sigma(r)}^{\boldsymbol{p}_{r}} \cdot m_{\boldsymbol{p}_{r+1}}(\boldsymbol{X}) - \boldsymbol{x}_{\sigma(1)}^{\boldsymbol{p}_1 + \boldsymbol{p}_{r+1}} \dots \boldsymbol{x}_{\sigma(r)}^{\boldsymbol{p}_{r}} - \dots - \boldsymbol{x}_{\sigma(1)}^{\boldsymbol{p}_1} \dots \boldsymbol{x}_{\sigma(r)}^{\boldsymbol{p}_{r} + \boldsymbol{p}_{r+1}} \right)\\
        = & m_{\boldsymbol{p}_1, \dots, \boldsymbol{p}_k}(\boldsymbol{X}) \cdot m_{\boldsymbol{p}_{r+1}}(\boldsymbol{X})
        - m_{\boldsymbol{p}_1 + \boldsymbol{p}_{r+1}, \boldsymbol{p}_2, \dots, \boldsymbol{p}_r} (\boldsymbol{X})
        - \cdots - m_{\boldsymbol{p}_1, \dots \boldsymbol{p}_{r-1}, \boldsymbol{p}_r + \boldsymbol{p}_{r+1}} (\boldsymbol{X})
    \end{split}
    \end{equation*}
    holds (note that the coefficient becomes $\frac{1}{(n-r-1)!} \cdot \frac{1}{n-r} = \frac{1}{(n-r)!}$ in the second row, as the term
    \begin{equation*}
    \begin{split}
        & \boldsymbol{x}_{\sigma(1)}^{\boldsymbol{p}_1} \dots \boldsymbol{x}_{\sigma(r)}^{\boldsymbol{p}_{r}} \cdot (\boldsymbol{x}_1^{\boldsymbol{p}_{r+1}} + \dots + \boldsymbol{x}_n^{\boldsymbol{p}_{r+1}} - \boldsymbol{x}_{\sigma(1)}^{\boldsymbol{p}_{r+1}} - \dots - \boldsymbol{x}_{\sigma(r)}^{\boldsymbol{p}_{r+1}}) \\
        & =  \sum_{1\leq j\leq n, j\neq \sigma(1), \dots, \sigma(r)} \boldsymbol{x}_{\sigma(1)}^{\boldsymbol{p}_1} \dots \boldsymbol{x}_{\sigma(r)}^{\boldsymbol{p}_{r}} \cdot \boldsymbol{x}_{\sigma(j)}^{\boldsymbol{p}_{r+1}}
    \end{split}
    \end{equation*}
    corresponds to the sum over $n-r$ elements in $S_n$). Next, if $r = n$, then
    \begin{equation*}
    \begin{split}
        & m_{\boldsymbol{p}_1, \dots, \boldsymbol{p}_r}(\boldsymbol{X}) \cdot m_{\boldsymbol{p}_{r+1}}(\boldsymbol{X})\\
        &=  \left( \sum_{\sigma \in S_n} \frac{1}{(n-r)!} \boldsymbol{x}_{\sigma(1)}^{\boldsymbol{p}_1} \dots \boldsymbol{x}_{\sigma(r)}^{\boldsymbol{p}_r} \right) \cdot (\boldsymbol{x}_1^{\boldsymbol{p}_{r+1}} + \dots + \boldsymbol{x}_r^{\boldsymbol{p}_{r+1}})\\
        &=  \sum_{\sigma} \frac{1}{(n-r)!} \boldsymbol{x}_{\sigma(1)}^{\boldsymbol{p}_1} \dots \boldsymbol{x}_{\sigma(r)}^{\boldsymbol{p}_r} \boldsymbol{x}_{\sigma(1)}^{\boldsymbol{p}_1} + \dots + \sum_{\sigma} \frac{1}{(n-r)!} \boldsymbol{x}_{\sigma(1)}^{\boldsymbol{p}_{r+1}} \dots \boldsymbol{x}_{\sigma(r)}^{\boldsymbol{p}_r} \cdot \boldsymbol{x}_{\sigma(r)}^{\boldsymbol{p}_{r+1}}\\
        &=  m_{\boldsymbol{p}_1 + \boldsymbol{p}_{r+1}, \boldsymbol{p}_2, \dots, \boldsymbol{p}_r} (\boldsymbol{X}) + \cdots + m_{\boldsymbol{p}_1, \dots \boldsymbol{p}_{r-1}, \boldsymbol{p}_r + \boldsymbol{p}_{r+1}} (\boldsymbol{X}), 
    \end{split}
    \end{equation*}
    implying that both sides of equation \eqref{eq2} are equal to 0. Last, if $r > n$, both sides of equation \ref{eq2} are equal to 0, since monomial column-symmetric polynomials with rank-$r$ or greater are all equal to 0 from its definition.
\end{proof}

Assume that a Transformer $\mathcal{T}_r$ approximates all monomial column-symmetric polynomials with rank-$r$ or less. Now, we denote $\mathcal{T}_r (m_{\boldsymbol{p}_1, \dots, \boldsymbol{p}_r})(\boldsymbol{X})$ as the approximation of $m_{\boldsymbol{p}_1, \dots, \boldsymbol{p}_r} (\boldsymbol{X})$ by $\mathcal{T}_r$. If $0 \leq \mathcal{T}_r (m_{\boldsymbol{p}_1, \dots, \boldsymbol{p}_r}) (\boldsymbol{X}) \leq P(n,r)$ and $0 \leq \mathcal{T}_r (m_{\boldsymbol{p}_{r+1}}) (\boldsymbol{X}) \leq n$, we can approximate rank-$(r+1)$ monomial column-symmetric polynomials $m_{\boldsymbol{p}_1, \dots, \boldsymbol{p}_r, \boldsymbol{p}_{r+1}} (\boldsymbol{X})$ by
\begin{equation}
\label{eq3}
\begin{split}
    m_{\boldsymbol{p}_1, \dots, \boldsymbol{p}_r, \boldsymbol{p}_{r+1}} (\boldsymbol{X}) \
    &\sim  \ nP(n,r) \cdot \tilde{g}_{Lk} \left( \frac{\mathcal{T}_r (m_{\boldsymbol{p}_1, \dots, \boldsymbol{p}_r}) (\boldsymbol{X})}{P(n,r)}, \frac{\mathcal{T}_r (m_{\boldsymbol{p}_{r+1}}) (\boldsymbol{X})}{n} \right) \\
    &  - \mathcal{T}_r (m_{\boldsymbol{p}_1 + \boldsymbol{p}_{r+1}, \boldsymbol{p}_2, \dots, \boldsymbol{p}_r}) (\boldsymbol{X}) - \cdots - \mathcal{T}_r (m_{\boldsymbol{p}_1, \dots \boldsymbol{p}_{r-1}, \boldsymbol{p}_r + \boldsymbol{p}_{r+1}}) (\boldsymbol{X}).
\end{split}
\end{equation}

Thus, all monomial column-symmetric polynomials of rank-$s$ or less can be constructed inductively. In the remainder of this section, we evaluate the Transformer network that approximates monomial column-symmetric polynomials of higher ranks, given that the approximation of rank-1 column-symmetric polynomials is provided as input. We evaluate its size in Section \ref{sec5.1} and its approximation error in Section \ref{sec5.2}.

\subsection{Size of Transformer Network}
\label{sec5.1}
By the result above, when the approximations of monomial column-symmetric polynomials of rank-$r$ or less are given as inputs, we can construct the right-hand side of equation \eqref{eq3} using a Transformer with width $12N$ and depth $L$, since applying $\tilde{g}_{Lk}$ requires such a size (note that addition, subtraction, and scaling of terms can be achieved by adjusting the parameter matrix $\boldsymbol{W}_2$ in the final feed-forward layer). In addition, when constructing approximations of rank-$r$ monomial column-symmetric polynomials, we apply the functions $\mathrm{ReLU} (x)$ and $x-\mathrm{ReLU}(x - P(n,r))$ at the end to ensure that the values of the Transformer approximation of rank-$r$ monomial column-symmetric polynomials lie within the range $[0, P(n,r)]$. (note that the maximum value of rank-$r$ monomial column-symmetric polynomials is
\begin{equation*}
    \sum_{\sigma \in S_n} \frac{1}{(n-r)!} = \frac{n!}{(n-r)!} = P(n,r),
\end{equation*}
as $S_n$ has $n!$ elements). As a result, approximating monomial column-symmetric polynomials of rank-$r+1$ from those of ranks $r$ and $1$ can be achieved by a Transformer network with width $12N$ and depth $L+2$ per polynomial.

Next, we consider the total number of monomial column-symmetric polynomials of rank-$r\ (\leq s)$. Consider a rank-$r$ monomial column-symmetric polynomial containing the term $\boldsymbol{x}_1^{\boldsymbol{p}_1} \dots \boldsymbol{x}_r^{\boldsymbol{p}_r}\ (|\boldsymbol{p}_1| + \dots + |\boldsymbol{p}_r| \leq s, \ \boldsymbol{p}_1 > \dots > \boldsymbol{p}_r > \boldsymbol{0}_d)$. From Lemma \ref{lm2}, the possible combinations of $(|\boldsymbol{p}_1|, \dots, |\boldsymbol{p}_r|)$ is $\displaystyle \binom{s}{r}$, as $|\boldsymbol{p}_1|, \dots, |\boldsymbol{p}_r| \geq 1$. 

In addition, consider the total number of combinations of $\boldsymbol{p}_1, \dots, \boldsymbol{p}_r$, where $|\boldsymbol{p}_1|, \dots, |\boldsymbol{p}_r|$ are fixed. For each $j = 1, \dots, r$, the number of solutions $(p_{1j}, \dots, p_{dj})$ satisfying the equation $p_{1j} + \dots + p_{dj} = |\boldsymbol{p}_j|$ with $p_{1j}, \dots, p_{dj} \geq 0$, by Lemma \ref{lm2}, is at most 
\begin{equation*}
    \binom{|\boldsymbol{p}_j| + d - 1}{d - 1} = \binom{|\boldsymbol{p}_j| + d - 1}{|\boldsymbol{p}_j|} = \frac{|\boldsymbol{p}_j| + d - 1}{|\boldsymbol{p}_j|} \frac{|\boldsymbol{p}_j| + d - 2}{|\boldsymbol{p}_j| - 1} \dots \frac{d}{1} \leq d^{|\boldsymbol{p}_j|}.
\end{equation*}
Since the product over all $j$-s yields $d^{|\boldsymbol{p}_1|} \cdots d^{|\boldsymbol{p}_r|} = d^{|\boldsymbol{p}_1| + \dots + |\boldsymbol{p}_r|} \leq d^s$, the total number of degree-$r$  monomial symmetric polynomials is at most $d^s \cdot \binom{s}{r}$. Hence, to construct a single degree-$(r+1)$ monomial column-symmetric polynomial from degree-$r$ monomial column-symmetric polynomials, the required width is at most
\begin{equation*}
    12N \cdot d^s \binom{s}{r} = 12N \cdot d^s 2^s \leq 12 \cdot (2d)^s N,
\end{equation*}
(note we have used Lemma \ref{lm1} in the first inequality) and the depth is at most $L+2$. Hence, $\mathrm{TF}_2$ can be realized with a Transformer network having width at most $12\cdot (2d)^s N$ and depth at most $(s-1)(L+2)$.

\subsection{Approximation Error}
\label{sec5.2}
In this subsection, we will provide an upper bound of the approximation error of monomial column-symmetric polynomials by $\mathcal{T} \coloneqq \mathrm{TF}_1 \circ \mathrm{TF}_2$. We will prove by induction, based on Lemma \ref{lm6}. 

According to Section \ref{sec4.3}, the approximation error of the rank-1 monomial column-symmetric polynomial $m_{\boldsymbol{p}_1}$ was at most $n(|\boldsymbol{p}_1| - 1)N^{-L} < P(n, 1) \cdot (|\boldsymbol{p}_1| + 1)N^{-L}$. Hence the assumption holds for $r = 1$. Next, assume that the approximation error of $m_{\boldsymbol{p}_1, \dots, \boldsymbol{p}_r} (\boldsymbol{X})$ in $[0,1]^{d \times n}$ is at most $(P(n+r-1, r) \cdot (|\boldsymbol{p}_1| + 1) \cdots (|\boldsymbol{p}_r| + 1) - n^r) N^{-L}$, for any $\boldsymbol{p}_1, \dots, \boldsymbol{p}_r > \boldsymbol{0}_d$ such that $|\boldsymbol{p}_1| + \dots + |\boldsymbol{p}_r| \leq s$. From equation \eqref{eq3}, the approximation error of a rank-$(r+1)$ monomial column-symmetric polynomial $m_{\boldsymbol{p}_1, \dots, \boldsymbol{p}_r, \boldsymbol{p}_{r+1}}(\boldsymbol{X})$ is at most
\begin{equation*}
\begin{split}
    & nP(n,r) \Bigg(N^{-L} + \Bigg| \frac{m_{\boldsymbol{p}_1, \dots, \boldsymbol{p}_r}(\boldsymbol{X}) + \mathcal{E}_{\mathcal{T}} (m_{\boldsymbol{p}_1, \dots, \boldsymbol{p}_r}(\boldsymbol{X}))}{P(n,r)} \cdot \frac{m_{\boldsymbol{p}_{r+1}} (\boldsymbol{X}) + \mathcal{E}_{\mathcal{T}} (m_{\boldsymbol{p}_{r+1}}(\boldsymbol{X}))}{n}\\
    & \quad - \frac{m_{\boldsymbol{p}_1, \dots, \boldsymbol{p}_r} (\boldsymbol{X}) m_{\boldsymbol{p}_{r+1}} (\boldsymbol{X})}{nP(n,r)} \Bigg| \Bigg) \\
    & + |\mathcal{E}_{\mathcal{T}} (m_{\boldsymbol{p}_1 + \boldsymbol{p}_{r+1}, \boldsymbol{p}_2, \dots, \boldsymbol{p}_r}(\boldsymbol{X}))| + \dots + |\mathcal{E}_{\mathcal{T}} (m_{\boldsymbol{p}_1, \dots, \boldsymbol{p}_{r-1}, \boldsymbol{p}_r + \boldsymbol{p}_{r+1}}(\boldsymbol{X}))|,
\end{split}
\end{equation*}
as the approximation error of $\phi (x,y) = xy\ (x,y\in [0,1])$ is at most $N^{-L}$ in $[0,1]^{d\times n}$. 
Since the maximum values of $m_{\boldsymbol{p}_1, \dots, \boldsymbol{p}_r}$ and in $[0,1]^{d\times n}$ are $P(n,r)$ and $n$, we obtain the the upper bound of the approximation error by performing some algebra, which is
\begin{equation}
\label{eq4}
\begin{split}
    &n|\mathcal{E}_{\mathcal{T}} (m_{\boldsymbol{p}_1, \dots, \boldsymbol{p}_r}(\boldsymbol{X}))| + P(n,r) |\mathcal{E}_{\mathcal{T}} (m_{\boldsymbol{p}_{r+1}}(\boldsymbol{X}))| \\
    &+ |\mathcal{E}_{\mathcal{T}} (m_{\boldsymbol{p}_1, \dots, \boldsymbol{p}_r}(\boldsymbol{X})) \mathcal{E}_{\mathcal{T}} (m_{\boldsymbol{p}_{r+1}}(\boldsymbol{X}))| + nP(n,r) N^{-L} \\
    &+ |\mathcal{E}_{\mathcal{T}} (m_{\boldsymbol{p}_1 + \boldsymbol{p}_{r+1}, \boldsymbol{p}_2, \dots, \boldsymbol{p}_r}(\boldsymbol{X}))| + \cdots + |\mathcal{E}_{\mathcal{T}} (m_{\boldsymbol{p}_1, \dots, \boldsymbol{p}_{r-1}, \boldsymbol{p}_r + \boldsymbol{p}_{r+1}}(\boldsymbol{X}))|.
\end{split}
\end{equation}
Now, the approximation errors of $m_{\boldsymbol{p}_1, \dots, \boldsymbol{p}_r}$ and $m_{\boldsymbol{p}_{r+1}}$ in $[0,1]^{d\times n}$ by $\mathcal{T}$ are at most $(P(n+r-1, r) \cdot (|\boldsymbol{p}_1| + 1) \cdots (|\boldsymbol{p}_r| + 1) - n^r) N^{-L}$ and $n(|\boldsymbol{p}_{r+1}| - 1) N^{-L}$ (the former follows from the assumption of the induction, and the latter from Section \ref{sec4.3}). Hence, the first 2 terms of equation \eqref{eq4} are at most
\begin{equation*}
\begin{split}
    & n\cdot (P(n+r-1, r) \cdot (|\boldsymbol{p}_1| + 1) \dots (|\boldsymbol{p}_r| + 1) - n^r) N^{-L} + P(n,r) \cdot (n |\boldsymbol{p}_{r+1}| - n) N^{-L}\\
    & \leq  n\cdot (P(n+r-1, r) \cdot (|\boldsymbol{p}_1| + 1) \dots (|\boldsymbol{p}_r| + 1) - n^r) N^{-L} + n^{r+1} \cdot (|\boldsymbol{p}_{r+1}| - 1) N^{-L}.
\end{split}
\end{equation*}
The next term $|\mathcal{E}_{\mathcal{T}} (m_{\boldsymbol{p}_1, \dots, \boldsymbol{p}_r}(\boldsymbol{X})) \mathcal{E}_{\mathcal{T}} (m_{\boldsymbol{p}_{r+1}}(\boldsymbol{X}))|$ in equation \eqref{eq4} is at most
\begin{equation*}
\begin{split}
    & (P(n+r-1, r) \cdot (|\boldsymbol{p}_1| + 1) \dots (|\boldsymbol{p}_r| + 1) - n^r) N^{-L} \cdot n(|\boldsymbol{p}_{r+1}| - 1) N^{-L} \\
    & \leq  n \cdot (P(n+r-1, r) \cdot (|\boldsymbol{p}_1| + 1) \dots (|\boldsymbol{p}_r| + 1) (|\boldsymbol{p}_{r+1}| - 1) - n^r (|\boldsymbol{p}_{r+1}| - 1)) N^{-L}.
\end{split}
\end{equation*}
From these inequalities, the first 3 terms in equation \eqref{eq4} is bounded by
\begin{equation*}
\begin{split}
    & n\cdot (P(n+r-1, r) \cdot (|\boldsymbol{p}_1| + 1) \dots (|\boldsymbol{p}_r| + 1) - n^r) N^{-L} + n^{r+1} \cdot (|\boldsymbol{p}_{r+1}| - 1) N^{-L} \\
    & + n \cdot (P(n+r-1, r) \cdot (|\boldsymbol{p}_1| + 1) \dots (|\boldsymbol{p}_r| + 1) (|\boldsymbol{p}_{r+1}| - 1) - n^r (|\boldsymbol{p}_{r+1}| - 1)) N^{-L} \\
    &= (n\cdot P(n+r-1,r) \cdot (|\boldsymbol{p}_1| + 1) \dots (|\boldsymbol{p}_r| + 1) |\boldsymbol{p}_{r+1}| - n^{r+1}) N^{-L}.
\end{split}
\end{equation*}
In addition, by using the inequality $a+b+1 < (a+1)(b+1)$ for $a,b\in \mathbb{N}_+$, the upper bound of the remaining terms in \eqref{eq4} (i.e. $|\mathcal{E}_{\mathcal{T}} (m_{\boldsymbol{p}_1 + \boldsymbol{p}_{r+1}, \boldsymbol{p}_2, \dots, \boldsymbol{p}_r}(\boldsymbol{X}))| + \cdots + |\mathcal{E}_{\mathcal{T}} (m_{\boldsymbol{p}_1, \dots, \boldsymbol{p}_{r-1}, \boldsymbol{p}_r + \boldsymbol{p}_{r+1}}(\boldsymbol{X}))|$) is
\begin{equation*}
\begin{split}
    & (P(n+r-1, r) \cdot (|\boldsymbol{p}_1| + |\boldsymbol{p}_{r+1}| + 1) (|\boldsymbol{p}_2| + 1) \dots (|\boldsymbol{p}_r| + 1) - n^r) N^{-L}\\
    &\quad  + \dots + (P(n+r-1, r) \cdot (|\boldsymbol{p}_1| + 1) \dots (|\boldsymbol{p}_{r-1}| + 1) (|\boldsymbol{p}_r| + |\boldsymbol{p}_{r+1}| + 1) - n^r) N^{-L}\\
    &<  r \cdot ((|\boldsymbol{p}_1| + 1) \dots (|\boldsymbol{p}_r| + 1) (|\boldsymbol{p}_{r+1}| + 1) - n^r) N^{-L}.
\end{split}
\end{equation*}
Hence, equation \eqref{eq4}, the approximation error of $m_{\boldsymbol{p}_1, \dots, \boldsymbol{p}_{r+1}}(\boldsymbol{X})$ in $[0,1]^{d\times n}$, is at most 
\begin{equation*}
\begin{split}
    & (nP(n+r-1, r) \cdot (|\boldsymbol{p}_1| + 1) \dots (|\boldsymbol{p}_r| + 1) |\boldsymbol{p}_{r+1}| - n^{r+1}) N^{-L} + nP(n,r) \cdot N^{-L}\\
    & \quad  + r \cdot (P(n+r-1, r) \cdot (|\boldsymbol{p}_1| + 1) \dots (|\boldsymbol{p}_r| + 1) (|\boldsymbol{p}_{r+1}| + 1) - n^r) N^{-L}\\
    &<  n\cdot (P(n+r-1, r) \cdot (|\boldsymbol{p}_1| + 1) \dots (|\boldsymbol{p}_r| + 1) (|\boldsymbol{p}_{r+1}| + 1) - n^{r+1}) N^{-L} \\
    & \quad - nP(n+1-r)\cdot N^{-L}+ nP(n,r)\cdot N^{-L} \\
    & \quad + r \cdot P(n+r-1, r) \cdot (|\boldsymbol{p}_1| + 1) \dots (|\boldsymbol{p}_r| + 1) (|\boldsymbol{p}_{r+1}| + 1) N^{-L}\\
    &\leq (P(n+r, r+1) \cdot (|\boldsymbol{p}_1| + 1) \dots (|\boldsymbol{p}_r| + 1) (|\boldsymbol{p}_{r+1}| + 1) - n^{r+1}) N^{-L},
\end{split}
\end{equation*}
which completes the induction. Thus, we have proved Proposition \ref{prop3}.

\section{Combining Pieces to Prove Theorem \ref{th1}}
\label{sec6}
According to Section \ref{sec5.2}, the approximation error for the rank-$r$ column-monomial symmetric polynomial $c_{\boldsymbol{p}_1, \dots, \boldsymbol{p}_r} \cdot m_{\boldsymbol{p}_1, \dots, \boldsymbol{p}_r}(\boldsymbol{X})$ is at most
\begin{equation}
\label{eq5}
    P(n+r-1, r) \cdot (|\boldsymbol{p}_1| + 1) \dots (|\boldsymbol{p}_r| + 1) N^{-L}.
\end{equation}
Note that when $x_{11} = \dots = x_{dn} = 1$, the value of the weighted column-monomial symmetric polynomial becomes $P(n,r)$. Dividing the approximation error \eqref{eq5} by $c_{\boldsymbol{p}_1, \dots, \boldsymbol{p}_r} \cdot P(n,r)$, we obtain
\begin{equation*}
\begin{split}
    &\frac{1}{P(n, r)} P(n+r-1, r) \cdot (|\boldsymbol{p}_1| + 1) \dots (|\boldsymbol{p}_r| + 1) N^{-L} \\
    & = \frac{n+r-1}{n} \cdot \frac{n+r-2}{n-1} \dots \frac{n}{n-r+1} \cdot (|\boldsymbol{p}_1| + 1) \dots (|\boldsymbol{p}_r| + 1) N^{-L} \\
    &= \left( 1 + \frac{r-1}{n} \right) \left( 1 + \frac{r-1}{n-1} \right) \dots \left( 1 + \frac{r-1}{n-r+1} \right) \cdot (|\boldsymbol{p}_1| + 1) \dots (|\boldsymbol{p}_r| + 1) N^{-L}.
\end{split}
\end{equation*}
If $n \geq r$, which is when the monomial column-symmetric polynomial is non-zero, and as long as $r$ is fixed, the terms $1 + \frac{r-1}{n}, 1 + \frac{r-1}{n-1}, \dots, 1 + \frac{r-1}{n-r+1}$ monotonically decrease as $n$ gets larger. Hence, as $|\boldsymbol{p}_1| + |\boldsymbol{p}_2| + \dots + |\boldsymbol{p}_r| \leq s$, the formula above is bounded by
\begin{equation*}
\begin{split}
    &\frac{1}{P(r, r)} P(r+r-1, r) \cdot (|\boldsymbol{p}_1| + 1) \dots (|\boldsymbol{p}_r| + 1) N^{-L} \\
    &= \binom{2r-1}{r} (|\boldsymbol{p}_1| + 1) \dots (|\boldsymbol{p}_r| + 1) \\
    &\leq 2^{2r-1} \cdot 2^s \leq 2^{2s-1} \cdot 2^s < 2^{3s},
\end{split}
\end{equation*}
where the first inequality follows from the inequality $a+b+1 \leq (a+1)(b+1)$ for $a,b \in \mathbb{N}_+$, implying the maximum of $(|\boldsymbol{p}_1| + 1) \dots (|\boldsymbol{p}_r| + 1)$ is achieved when $r = s$ and $|\boldsymbol{p}_1| = \dots =  |\boldsymbol{p}_r| = 1$. Thus, we obtain the approximation error for weighted rank-$r$ monomial column-symmetric polynomials as
\begin{align*}
    |\mathcal{E}_{\mathrm{TF}_1 \circ \mathrm{TF}_2} (m_{\boldsymbol{p}_1, \dots, \boldsymbol{p}_r}(\boldsymbol{X}))| &\leq 8^s N^{-L} \cdot P(n,r) \\
    &= 8^s N^{-L} \cdot m_{\boldsymbol{p}_1, \dots, \boldsymbol{p}_r} (\boldsymbol{1}_{d\times n}).
\end{align*}
From Lemma \ref{lm3}, any permutation symmetric function  $f(\boldsymbol{X})$ can be expressed as a weighted sum of monomial symmetric polynomials: i.e. there exist coefficients $c_{\boldsymbol{p}_1, \dots, \boldsymbol{p}_r}$ such that 
\begin{equation*}
    f(\boldsymbol{X}) = \sum_{1\leq r\leq s, \boldsymbol{p}_1 \geq \dots \geq \boldsymbol{p}_r} c_{\boldsymbol{p}_1, \dots, \boldsymbol{p}_r} m_{\boldsymbol{p}_1, \dots, \boldsymbol{p}_r} (\boldsymbol{X}).
\end{equation*}
Thus, by taking the sum of all weighted monomial column-symmetric polynomials, we obtain the upper bound of the approximation error for $f(\boldsymbol{X})$ as 
\begin{equation*}
    \sum_{1\leq r\leq s, \boldsymbol{p}_1 \geq \dots \geq \boldsymbol{p}_r} c_{\boldsymbol{p}_1, \dots, \boldsymbol{p}_r} \cdot 8^s N^{-L} \cdot m_{\boldsymbol{p}_1, \dots, \boldsymbol{p}_r} (\boldsymbol{1}_{d\times n}) = 8^s N^{-L} \cdot f(\boldsymbol{1}_{d\times n}).
\end{equation*}
Now, $\mathrm{TF}_1 \circ \mathrm{TF}_2$ can be constructed by a Transformer network with width at most $\max (12sd^s, 12(2d)^s) = 12(2d)^s$ and depth at most $(s-1)(L+1) + 1 + (s-1)(L+2) = (s-1)(2L+3) + 1 < 2sL + 3s$. Since taking the weighted sum of inputs on the same column can be achieved by the feed-forward layer, $\mathrm{TF}$ can be constructed by altering the parameters of the last feed-forward layer of $\mathrm{TF}_1 \circ \mathrm{TF}_2$. Hence, we have proved Theorem \ref{th1}.

\section{Discussions}
\label{sec7}

\subsection{Transformer vs. Neural networks}
In this paper, we have utilized the parameter efficiency of Transformers, specifically the efficiency of the attention layer and the parallel processing capability of the feed-forward layer. Since we used only the attention layer to compute the row-wise sum of inputs, the entire process in this paper can be implemented using neural networks by treating the input $\boldsymbol{X} \in [0,1]^{d\times n}$ as a $d\times n$ dimensional vector. However, in this case, it is difficult to fully reflect the symmetry of the target function in terms of parameter efficiency, as separate parameters are required to construct column-wise monomials (as described in Section \ref{sec4}) for each individual column in $\boldsymbol{X}$. Similarly, computing the sum of column-wise monomials (as described in Section \ref{sec4.4}) also requires individual parameters for each column in $\boldsymbol{X}$. As a result, the number of parameters needed to construct a neural network equivalent to $\mathrm{TF}_1$ (in Proposition \ref{prop2}) increases linearly with the number of input columns, whereas it remains constant in Transformers. Hence, our construction requires fewer parameters compared to conventional neural networks when $d \ll n$ holds.

\subsection{Discussion on Number of Parameters}
In this study, monomial column-symmetric polynomials were used to universally approximate column-symmetric polynomials using single-headed Transformers, whose number of parameters does not depend on the number of input columns. This coincides with previous work \cite{Kaj24}, which demonstrates that single-layer Transformers are universal approximators. Furthermore, the number of feed-forward layers required by the Transformer is approximately $2sL + 3s$, which is comparable to the depth of Transformers used in practice.

In addition, when the degree of the target function $s$ is small, particularly when $s\leq 3$, the width of the Transformer can be of practical size. This corresponds to the case where the inputs interact with only a very limited number of other elements. In addition, the approximation error decreases exponentially with respect to the number of layers, while it only decreases polynomially with respect to the width. Hence, our results demonstrate the efficiency of deep Transformers.

On the other hand, the width of $\mathrm{TF}$ is proportional to $(2d)^s$, which becomes excessively large as $d$ and $s$ increase. This issue arises because the number of monomial column-symmetric polynomials of degree $s$ or less within a single column increases exponentially with $s$. Reducing the number of parameters to a practical level is a critical direction for future work to better understand the representational power of Transformers.

\subsection{Discussion on Positional Encoding}
When applying Transformers to various tasks, it is common to use positional encodings, values that distinguish individual columns, as in \cite{Vas17}. The use of positional encoding allows for discussions on more general cases, where symmetry is present only among specific columns. A similar discussion has been conducted in the context of neural networks in \cite{Mar19}. Exploring the impact of positional encodings raises intriguing questions: the extent of parameters required for such constructions and the specific architectures of Transformers. We leave these questions for future work.

\section*{Acknowledgements}
\label{sec8}
We would like to express our gratitude to Prof. Naoto Shiraishi and Prof. Koji Fukushima for fruitful comments and discussions.
This study was supported by JSPS KAKENHI (24K02904), JST CREST (JPMJCR21D2), and JST FOREST (JPMJFR216I).


\appendix

\section{Basic Mathematical Properties}
Here, we provide basic lemmas which are used in this paper.
    
\begin{lem}
\label{lm1}
For any $n \in \mathbb{N}$, the following equation holds.
\begin{equation*}
    \begin{pmatrix}  n \\ 0 \end{pmatrix} + \begin{pmatrix}  n \\ 1 \end{pmatrix} + \dots + \begin{pmatrix}  n \\ n \end{pmatrix} = 2^n.
\end{equation*}
Hence, $\displaystyle \binom{n}{k} \leq 2^n$ holds for any $n,k\in \mathbb{N}$.
\end{lem}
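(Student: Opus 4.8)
The plan is to prove the identity via the binomial theorem and then deduce the stated inequality as an immediate corollary. First I would recall the binomial theorem: for any real numbers $a,b$ and any $n\in\mathbb{N}$,
\begin{equation*}
    (a+b)^n = \sum_{k=0}^n \binom{n}{k} a^k b^{n-k}.
\end{equation*}
Setting $a=b=1$ yields $2^n = \sum_{k=0}^n \binom{n}{k}$, which is exactly the claimed identity. If one prefers to avoid invoking the binomial theorem, the same identity follows by a short induction on $n$: the base case $n=0$ reads $\binom{0}{0}=1=2^0$, and the inductive step uses Pascal's rule $\binom{n}{k}=\binom{n-1}{k-1}+\binom{n-1}{k}$ together with the conventions $\binom{n-1}{-1}=\binom{n-1}{n}=0$ to rewrite $\sum_{k=0}^n\binom{n}{k}$ as $\sum_{k=0}^{n-1}\binom{n-1}{k}+\sum_{k=0}^{n-1}\binom{n-1}{k}=2\cdot 2^{n-1}=2^n$.

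For the second assertion, I would simply note that every binomial coefficient is a nonnegative integer, so for $0\le k\le n$ the term $\binom{n}{k}$ is one of the summands of $\binom{n}{0}+\dots+\binom{n}{n}=2^n$ and hence is at most $2^n$; for $k>n$ we have $\binom{n}{k}=0\le 2^n$ by convention. This covers all $n,k\in\mathbb{N}$ and completes the proof.

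There is essentially no substantive obstacle here; the only points requiring a little care are fixing the convention for $\binom{n}{k}$ when $k$ lies outside $\{0,1,\dots,n\}$ (so that Pascal's rule and the final inequality are stated cleanly) and, if the inductive route is taken, being explicit about the index shift in the first of the two sums in the inductive step.
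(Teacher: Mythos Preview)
Your proof is correct and matches the paper's own justification: the paper simply remarks that Lemma~\ref{lm1} follows from the binomial theorem together with the convention $\binom{n}{k}=0$ for $k>n$, which is exactly your argument. The inductive alternative you offer is also valid but unnecessary here.
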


Lemma \ref{lm1} easily follows from the binomial theorem, and from $\binom{n}{k} = 0$ when $n < k$. The following lemma is a well-known result of classic combinatorics.

\begin{lem}
\label{lm2}
    The number of sets of integers $(p_1, p_2, \dots, p_n)$ which satisfy 
    \begin{equation*}
        p_1 + p_2 + \dots + p_n = k,\quad p_1, p_2, \dots, p_n \geq 0
    \end{equation*}
    are $\displaystyle \begin{pmatrix} k+n-1 \\ n-1 \end{pmatrix}$.
\end{lem}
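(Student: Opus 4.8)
The plan is to give the standard ``stars and bars'' argument via an explicit bijection. First I would introduce, for a solution $(p_1,\dots,p_n)$ with $p_i \geq 0$ and $\sum_{i=1}^n p_i = k$, the string over the alphabet $\{\star,\mid\}$ obtained by concatenating $p_1$ copies of $\star$, then one $\mid$, then $p_2$ copies of $\star$, then one $\mid$, and so on, ending with $p_n$ copies of $\star$. This string has length $k+(n-1)$ and contains exactly $n-1$ occurrences of $\mid$. Conversely, given any arrangement of $k$ stars and $n-1$ bars in a row, reading off the number of stars before the first bar, between consecutive bars, and after the last bar recovers a tuple $(p_1,\dots,p_n)$ of non-negative integers summing to $k$. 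I would then verify that these two maps are mutually inverse, so the correspondence is a bijection.

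Having established the bijection, the count reduces to the number of length-$(k+n-1)$ strings with exactly $n-1$ bars, which is the number of ways to choose the $n-1$ positions occupied by bars among the $k+n-1$ available positions, namely $\binom{k+n-1}{n-1}$. This proves the lemma.

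If one prefers to avoid the bijection, an alternative is induction on $n$: the base case $n=1$ gives the unique solution $p_1=k$, matching $\binom{k}{0}=1$; for the inductive step one partitions the solution set according to the value $p_n=j\in\{0,\dots,k\}$ and applies the inductive hypothesis to $p_1+\dots+p_{n-1}=k-j$, so that the total is $\sum_{j=0}^{k}\binom{k-j+n-2}{n-2}$, which equals $\binom{k+n-1}{n-1}$ by the hockey-stick identity $\sum_{i=0}^{m}\binom{i+t}{t}=\binom{m+t+1}{t+1}$.

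There is essentially no hard part here; the result is classical, and the bijective route is the cleanest. The only points requiring a little care are the degenerate cases $k=0$ and $n=1$ (where the string construction produces an all-bar or an all-star word, respectively, and both formulas still agree), and, in the inductive route, a clean justification of the hockey-stick identity, which itself follows by a one-line telescoping argument using Pascal's rule.
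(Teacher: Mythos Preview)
Your proof is correct and is essentially the same bijection the paper uses: the paper sets $s_i = p_1 + \cdots + p_i + i$ and observes that choosing $s_1 < \cdots < s_{n-1}$ from $\{1,\dots,k+n-1\}$ gives the count, and these $s_i$ are precisely the positions of your $n-1$ bars in the length-$(k+n-1)$ string. The inductive alternative you sketch is not in the paper but is also fine.
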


\begin{proof}
    Let $s_0 = 0,\ s_i = p_1 + \dots + p_i + i\ (i = 1, 2, \dots, n)$. Then, the number of sets of integers $(p_1, p_2, \dots, p_n)$ are equivalent to the number of sets of integers $s_1, s_2, \dots, s_{n-1}$ which satisfy 
    \begin{equation*}
        s_0 = 0,\quad s_n = k + n,\quad s_i > s_{i-1}\ (i = 1, 2, \dots, n).
    \end{equation*}
    As this is equivalent to choosing $n-1$ distinct integers from 1 to $k+n-1$, the desired count is $\displaystyle \begin{pmatrix} k+n-1 \\ n-1 \end{pmatrix}$.
\end{proof}
Note that when the condition in Lemma \ref{lm2} is altered to $p_1 + p_2 + \dots + p_n \leq k$, by introducing an additional variable $p_{n+1} = k - (p_1 + \cdots + p_n)$, the problem can be reformulated as finding the total number of non-negative integer solutions to $p_1 + p_2 + \cdots + p_{n+1} = k$, which is $\displaystyle \begin{pmatrix} k+n \\ n \end{pmatrix}$.

The next lemma demonstrates that monomial column-symmetric polynomials generate the set of column-symmetric polynomials.

\begin{lem}
\label{lm3}
    Any column-symmetric polynomial $f(\boldsymbol{X})$ can be written in a linear combination of monomial column-symmetric polynomials.
\end{lem}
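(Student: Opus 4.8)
The plan is to carry over, to the matrix setting, the classical proof that the monomial symmetric polynomials span the symmetric polynomials, organised around the $S_n$-action on exponent tuples rather than an explicit leading-term induction.

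First I would expand $f$ into ordinary monomials. Writing $\boldsymbol{X} = (\boldsymbol{x}_1, \dots, \boldsymbol{x}_n)$ with $\boldsymbol{x}_j \in \mathbb{R}^d$, every monomial appearing in $f$ has the form $\boldsymbol{x}_1^{\boldsymbol{q}_1} \cdots \boldsymbol{x}_n^{\boldsymbol{q}_n}$ for a tuple $\boldsymbol{Q} = (\boldsymbol{q}_1, \dots, \boldsymbol{q}_n) \in (\mathbb{N}^d)^n$, so $f = \sum_{\boldsymbol{Q}} c_{\boldsymbol{Q}}\, \boldsymbol{x}_1^{\boldsymbol{q}_1} \cdots \boldsymbol{x}_n^{\boldsymbol{q}_n}$ with only finitely many $c_{\boldsymbol{Q}} \neq 0$, all of total degree at most $s = \deg f$. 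Let $S_n$ act on tuples by $\sigma \cdot \boldsymbol{Q} = (\boldsymbol{q}_{\sigma^{-1}(1)}, \dots, \boldsymbol{q}_{\sigma^{-1}(n)})$; substituting $\boldsymbol{x}_j \mapsto \boldsymbol{x}_{\sigma(j)}$ sends the monomial of $\boldsymbol{Q}$ to the monomial of $\sigma \cdot \boldsymbol{Q}$. Comparing coefficients in the column-symmetry identity $f(\boldsymbol{x}_{\sigma(1)}, \dots, \boldsymbol{x}_{\sigma(n)}) = f(\boldsymbol{x}_1, \dots, \boldsymbol{x}_n)$ then gives $c_{\sigma \cdot \boldsymbol{Q}} = c_{\boldsymbol{Q}}$ for every $\sigma \in S_n$; that is, $\boldsymbol{Q} \mapsto c_{\boldsymbol{Q}}$ is constant on each $S_n$-orbit.

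Next I would group the monomials orbit by orbit. Each orbit $O$ has a unique representative $\boldsymbol{P} = (\boldsymbol{p}_1, \dots, \boldsymbol{p}_r, \boldsymbol{0}_d, \dots, \boldsymbol{0}_d)$ with $\boldsymbol{p}_1 \geq \cdots \geq \boldsymbol{p}_r > \boldsymbol{0}_d$ in lexicographic order, where $r \leq n$ is the number of nonzero entries and moreover $r \leq s$ since each $|\boldsymbol{p}_i| \geq 1$ and $\sum_i |\boldsymbol{p}_i| \leq s$. Writing $c_O$ for the common coefficient on $O$, the contribution of $O$ to $f$ is $c_O \sum_{\boldsymbol{Q} \in O} \boldsymbol{x}_1^{\boldsymbol{q}_1} \cdots \boldsymbol{x}_n^{\boldsymbol{q}_n}$, so it remains to identify this orbit sum with a monomial column-symmetric polynomial. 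Summing the monomial of $\sigma \cdot \boldsymbol{P}$ over all $\sigma \in S_n$ and using orbit--stabiliser gives
\begin{equation*}
    \sum_{\sigma \in S_n} \boldsymbol{x}_{\sigma(1)}^{\boldsymbol{p}_1} \cdots \boldsymbol{x}_{\sigma(r)}^{\boldsymbol{p}_r}
    = |\mathrm{Stab}(\boldsymbol{P})| \sum_{\boldsymbol{Q} \in O} \boldsymbol{x}_1^{\boldsymbol{q}_1} \cdots \boldsymbol{x}_n^{\boldsymbol{q}_n}.
\end{equation*}
Since all of $\boldsymbol{p}_1, \dots, \boldsymbol{p}_r$ are nonzero, the $\boldsymbol{0}_d$'s occupy exactly the last $n-r$ coordinates, and if $\boldsymbol{p}_1, \dots, \boldsymbol{p}_r$ consist of $i_1, \dots, i_m$ copies of the distinct vectors $\boldsymbol{p}_{j_1}, \dots, \boldsymbol{p}_{j_m}$, then $\mathrm{Stab}(\boldsymbol{P}) = S_{n-r} \times \prod_{l=1}^m S_{i_l}$, so $|\mathrm{Stab}(\boldsymbol{P})| = (n-r)!\, i_1! \cdots i_m!$. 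Dividing the display by $(n-r)!$, its left side is exactly $m_{\boldsymbol{p}_1, \dots, \boldsymbol{p}_r}(\boldsymbol{X})$, hence $\sum_{\boldsymbol{Q} \in O} \boldsymbol{x}_1^{\boldsymbol{q}_1} \cdots \boldsymbol{x}_n^{\boldsymbol{q}_n} = \tfrac{1}{i_1! \cdots i_m!}\, m_{\boldsymbol{p}_1, \dots, \boldsymbol{p}_r}(\boldsymbol{X})$, and summing over orbits yields
\begin{equation*}
    f(\boldsymbol{X}) = \sum_{O} \frac{c_O}{i_1(O)! \cdots i_m(O)!}\, m_{\boldsymbol{p}_1(O), \dots, \boldsymbol{p}_r(O)}(\boldsymbol{X}),
\end{equation*}
a finite linear combination of monomial column-symmetric polynomials with $1 \leq r \leq s$ (plus a constant term, which is the orbit $r = 0$ with $m_{\emptyset}(\boldsymbol{X}) = 1$).

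The main obstacle is the bookkeeping in the last step: one must match the $\tfrac{1}{(n-r)!}$ normalisation built into the definition of $m_{\boldsymbol{p}_1,\dots,\boldsymbol{p}_r}$ and the repetition factors $i_1!\cdots i_m!$ (which are precisely the per-term coefficients recorded after that definition) against the order of the stabiliser of the zero-padded exponent tuple, and to handle the degenerate cases cleanly — the constant term, and the fact that no orbit ever produces $r > n$ (the tuples have only $n$ coordinates), so the vanishing convention $m_{\boldsymbol{p}_1,\dots,\boldsymbol{p}_r} = 0$ for $r > n$ is not needed here. Everything else is formal manipulation of finite sums.
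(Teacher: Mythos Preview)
Your proof is correct and takes a genuinely different route from the paper's. The paper argues by leading-term descent: order monomials lexicographically on their exponent tuples, pick the highest one appearing in $f$, subtract the appropriate multiple of the associated $m_{\boldsymbol{p}_1,\dots,\boldsymbol{p}_l}$ to kill that term, observe the remainder has strictly smaller leading term, and iterate down to a constant. You instead decompose $f$ directly into $S_n$-orbit sums of monomials, use column-symmetry to see that the coefficient map is constant on orbits, and then identify each orbit sum with $\tfrac{1}{i_1!\cdots i_m!}\,m_{\boldsymbol{p}_1,\dots,\boldsymbol{p}_r}$ via orbit--stabiliser and the $\tfrac{1}{(n-r)!}$ normalisation in the definition. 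Your approach yields the explicit coefficients in one pass and makes linear independence (hence uniqueness of the expansion) immediate, since distinct orbits involve disjoint sets of monomials; the paper's descent argument is the classical template that also adapts to other generating families (elementary symmetric polynomials, power sums) where an orbit-sum identification is not available, but for the monomial basis specifically your argument is the more transparent one.
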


\begin{proof}
    First, we compare the degree of polynomials $P_1(\boldsymbol{X}) = \boldsymbol{x}_1^{\boldsymbol{u}_1} \dots \boldsymbol{x}_l^{\boldsymbol{u}_l}$ and $P_2(\boldsymbol{X}) = \boldsymbol{x}_1^{\boldsymbol{v}_1} \dots \boldsymbol{x}_l^{\boldsymbol{v}_l}$. If $\boldsymbol{u}_1 > \boldsymbol{v}_1$, we regard $P_1$ has a higher degree than $P_2$, and vice versa. If $\boldsymbol{u}_1 = \boldsymbol{v}_1$, we compare $\boldsymbol{u}_2$ and $\boldsymbol{v}_2$ and so on, similarly to the case of comparing vectors.
    
    Let $\boldsymbol{x}_1^{\boldsymbol{p}_1} \dots \boldsymbol{x}_l^{\boldsymbol{p}_l}$ be the monomial in  $f(\boldsymbol{X})$ with the highest degree. In this case, for any permutation $(\sigma(1), \sigma(2), \dots, \sigma(n))$ of $(1, 2, \dots, n)$, $\boldsymbol{x}_{\sigma(1)}^{\boldsymbol{p}_1} \dots \boldsymbol{x}_{\sigma(l)}^{\boldsymbol{p}_l}$ must have a lower degree than $\boldsymbol{x}_1^{\boldsymbol{p}_1} \dots \boldsymbol{x}_l^{\boldsymbol{p}_l}$, since $f(\boldsymbol{X})$ is column-symmetric and must contain these terms. Consider the polynomial 
    \begin{equation}
    \label{eq6}
        f(\boldsymbol{X}) - c \cdot \sum_{\sigma \in S_n} \boldsymbol{x}_{\sigma(1)}^{\boldsymbol{p}_1} \dots \boldsymbol{x}_{\sigma(l)}^{\boldsymbol{p}_l}, 
    \end{equation}
    where $c$ is set so that the coefficient of $\boldsymbol{x}_1^{\boldsymbol{p}_1} \dots \boldsymbol{x}_l^{\boldsymbol{p}_l}$ in \eqref{eq6} is equal to 0. In this case, \eqref{eq6} must have a degree lower than that of $f(\boldsymbol{X})$, because otherwise it would contradict the assumption that $\boldsymbol{x}_1^{\boldsymbol{p}_1} \dots \boldsymbol{x}_l^{\boldsymbol{p}_l}$ has the largest degree.
    
    By repeatedly applying this operation, eventually we arrive at a polynomial of degree 0, which is a constant. Thus, $f(\boldsymbol{X})$ must be able to be expressed as a linear combination of monomial column-symmetric polynomials.
\end{proof}

\section{Proof of Lemma \ref{lm7}}
\label{sec9.2}

\begin{lem}
\label{lm4}
    The equality 
    \begin{equation*}
        T_k(x) = T_1\left( 2^{k-1} \left(x - \frac{i}{2^{k-1}}\right) \right)
    \end{equation*}
    holds for any $x \in [0,1], k \geq 1$ and $\displaystyle \frac{i}{2^{k-1}} \leq x \leq \frac{i+1}{2^{k-1}}$, where $i \in \{0,1,\dots,2^{k-1}-1 \}$.
\end{lem}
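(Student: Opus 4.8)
The plan is to prove the identity by induction on $k$, exploiting the self-similar tent structure of the sawtooth functions $T_k$. The base case $k=1$ is immediate: there $2^{k-1}=1$, the only admissible index is $i=0$, and the asserted equality reduces to $T_1(x)=T_1(x)$. For the inductive step I will use two elementary facts. First, since $T_{i+1}=T_i\circ T_1$, an easy induction shows that $T_k$ is the $k$-fold composition of $T_1$ with itself, so in particular $T_{k+1}=T_1\circ T_k$. Second, $T_1$ is symmetric about $1/2$, i.e. $T_1(w)=T_1(1-w)$ for every $w\in[0,1]$; this is read off directly from the two-piece definition of $T_1$ (both sides equal $2w$ when $w\le\tfrac12$ and both equal $2(1-w)$ when $w\ge\tfrac12$).

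Now assume the identity for $k$ and fix $i\in\{0,\dots,2^{k}-1\}$ and $x$ with $\frac{i}{2^{k}}\le x\le\frac{i+1}{2^{k}}$. Set $j=\lfloor i/2\rfloor\in\{0,\dots,2^{k-1}-1\}$; then $x$ lies in the parent dyadic interval $[\frac{j}{2^{k-1}},\frac{j+1}{2^{k-1}}]$, so the inductive hypothesis gives $T_k(x)=T_1(u)$ with $u:=2^{k-1}x-j\in[0,1]$, and hence $T_{k+1}(x)=T_1(T_k(x))=T_1(T_1(u))$. If $i=2j$ (the left half of the parent interval) then $u\in[0,\tfrac12]$, so $T_1(u)=2u$ and $T_{k+1}(x)=T_1(2u)=T_1(2^{k}x-i)=T_1\!\big(2^{k}(x-\tfrac{i}{2^{k}})\big)$. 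If $i=2j+1$ (the right half) then $u\in[\tfrac12,1]$, so $T_1(u)=2(1-u)\in[0,1]$ and $T_{k+1}(x)=T_1(2-2u)$; a one-line computation gives $2-2u=1-(2^{k}x-i)$, and applying the symmetry $T_1(\,\cdot\,)=T_1(1-\,\cdot\,)$ with $w=2^{k}x-i\in[0,1]$ yields $T_{k+1}(x)=T_1(2^{k}x-i)=T_1\!\big(2^{k}(x-\tfrac{i}{2^{k}})\big)$. This closes the induction.

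The only step that is not pure bookkeeping is the odd-index case: the rescaled argument produced by the inductive hypothesis there is the reflection $1-w$ of the argument $w=2^{k}x-i$ we want, and the identity is salvaged precisely by the symmetry $T_1(w)=T_1(1-w)$. The remaining work — identifying which dyadic subinterval a point lies in, tracking the affine rescalings, and substituting the explicit piecewise formula for $T_1$ — is routine. For completeness I would also note that the two formulas attached to adjacent subintervals agree at their shared endpoint (both evaluate to $T_1(0)=T_1(1)=0$), so the statement is well posed at the dyadic points where the index $i$ is ambiguous.
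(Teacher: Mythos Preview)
Your proof is correct and follows essentially the same route as the paper: induction on $k$, splitting the inductive step into the left- and right-half subintervals, computing $T_k$ explicitly via the inductive hypothesis, and invoking the symmetry $T_1(w)=T_1(1-w)$ to handle the right (odd-index) half. The only cosmetic difference is that the paper indexes parent-to-child (starting from the level-$(k-1)$ index and splitting into $2i$ and $2i+1$) whereas you index child-to-parent (starting from the level-$k$ index $i$ and setting $j=\lfloor i/2\rfloor$), but the case analysis and all computations are identical.
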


\begin{proof}
    We prove the lemma by induction on $k$. The case for $k=1$ is trivial, since $2^{k-1} = 1$ implies $i=0$ for any $x\in [0,1]$. Next, we assume Lemma \ref{lm4} holds for $T_k(x)$. Since $\displaystyle \frac{i}{2^{k-1}} \leq x \leq \frac{i+1}{2^{k-1}}$, we can divide the discussion into two cases. 
    
    First, if $\displaystyle \frac{2i}{2^k} \leq x \leq \frac{2i+1}{2^k}$, then $\displaystyle 2^{k-1} \left(x - \frac{i}{2^{k-1}} \right) \leq \frac{1}{2}$ implies 
    \begin{equation*}
        T_k(x) =  T_1\left( 2^{k-1} \left(x - \frac{i}{2^{k-1}}\right) \right) = 2 \cdot 2^{k-1} \left(x - \frac{i}{2^{k-1}}\right) = 2^k \left(x - \frac{2i}{2^k}\right), 
    \end{equation*}
    resulting in $\displaystyle T_{k+1}(x) = T_1(T_k(x)) = T_1\left(2^k \left(x - \frac{2i}{2^k}\right) \right)$. 
    
    Second, if $\displaystyle \frac{2i+1}{2^k} \leq x \leq \frac{2(i+1)}{2^k}$, then $\displaystyle 2^{k-1} \left(x - \frac{i}{2^{k-1}} \right) \geq \frac{1}{2}$ implies 
    \begin{equation*}
        T_k(x) =  T_1\left( 2^{k-1} \left(x - \frac{i}{2^{k-1}}\right) \right) = 2 - 2\cdot 2^{k-1} \left(x - \frac{i}{2^{k-1}} \right) = 2 - 2^k \left(x - \frac{i}{2^{k-1}}\right).
    \end{equation*}
    It is obvious that $T_1(1-x') = T_1(x')$ holds for any $x' \in [0,1]$, so we obtain 
    \begin{equation*}
    \begin{split}
        \displaystyle T_{k+1}(x) = & T_1(T_k(x)) = T_1\left(2 - 2^k \left(x - \frac{i}{2^{k-1}}\right)\right) = T_1 \left(1 - \left(2 - 2^k \left(x - \frac{i}{2^{k-1}}\right) \right) \right) \\
        = & T_1 \left( 2^k \left(x - \frac{2i}{2^k}\right) - 1 \right) = T_1 \left( 2^k \left(x - \frac{2i + 1}{2^k}\right) \right).
    \end{split}
    \end{equation*}
    In either case, we get $\displaystyle T_{k+1}(x) = T_1 \left( 2^k \left(x - \frac{i'}{2^k}\right) \right)$, where $\displaystyle \frac{i'}{2^k} \leq x \leq \frac{i'+1}{2^k}$ and $i'\in \{ 0,1,\dots,2^k-1 \}$, which completes the induction.
\end{proof}

Now we can prove Lemma \ref{lm7}.

\begin{proof}
    We also prove this proposition by induction on $k$. The case for $k=1$ is easy because
    \begin{equation*}
        \tilde{f}_1(x) - x^2 = \left(x - \frac{1}{4} T_1(x)\right) - x^2 = 
        \begin{cases}
            \frac{1}{2} x - x^2 = -x\left( x - \frac{1}{2} \right) \quad \text{if}\ 0 \leq x \leq \frac{1}{2},\\
            (\frac{3}{2} x - \frac{1}{2}) - x^2 = -(x - \frac{1}{2})(x - 1) \quad \text{if}\ \frac{1}{2}\leq x \leq 1.
        \end{cases}
    \end{equation*}
    If Lemma \ref{lm7} holds for $\tilde{f}_k(x)$, the assumption and Lemma \ref{lm4} imply 
    \begin{equation*}
    \begin{split}
        \tilde{f}_{k+1}(x) - x^2 = & \tilde{f}_k (x) - x^2 + \frac{1}{4^{k+1}} T_{k+1}(x) \\
        = & - \left( x - \frac{i}{2^k} \right) \left( x - \frac{i+1}{2^k} \right) + \frac{1}{4^{k+1}} T_1\left( 2^k \left(x - \frac{i}{2^k}\right) \right).
    \end{split}
    \end{equation*}
    Hence, if $\displaystyle \frac{2i}{2^{k+1}} \leq x \leq \frac{2i+1}{2^{k+1}}$, then $\displaystyle 2^k \left(x - \frac{i}{2^k}\right) \leq \frac{1}{2}$ implies 
    \begin{equation*}
    \begin{split}
        & - \left( x - \frac{i}{2^k} \right) \left( x - \frac{i+1}{2^k} \right) + \frac{1}{4^{k+1}} T_1\left( 2^k \left(x - \frac{i}{2^k}\right) \right) \\
        = & - \left( x - \frac{i}{2^k} \right) \left( x - \frac{i+1}{2^k} \right) + \frac{1}{2^{k+1}} \left(x - \frac{i}{2^k}\right) = - \left( x - \frac{i}{2^k} \right) \left( x - \frac{2i+1}{2^{k+1}} \right).
    \end{split}
    \end{equation*}
    On the other hand, if $\displaystyle \frac{2i+1}{2^{k+1}} \leq x \leq \frac{2(i+1)}{2^{k+1}}$, then $\displaystyle 2^k \left(x - \frac{i}{2^k}\right) \geq \frac{1}{2}$ implies 
    \begin{equation*}
    \begin{split}
         & - \left( x - \frac{i}{2^k} \right) \left( x - \frac{i+1}{2^k} \right) + \frac{1}{4^{k+1}} T_1\left( 2^k \left(x - \frac{i}{2^k}\right) \right)\\
        = & - \left( x - \frac{i}{2^k} \right) \left( x - \frac{i+1}{2^k} \right) + \frac{1}{4^{k+1}} \left( 2 - 2^{k+1} \left(x - \frac{i}{2^k}\right) \right)\\
        = & - \left( x - \frac{i}{2^k} \right)  \left( x - \frac{i+1}{2^k} \right) + \frac{1}{2^{k+1}} \left( x - \frac{i+1}{2^{k+1}} \right) = - \left( x - \frac{2i+1}{2^{k+1}} \right) \left( x - \frac{i+1}{2^k} \right)
    \end{split}
    \end{equation*}
    For either case, $\displaystyle \tilde{f}_{k+1}(x) - x^2 = - \left( x - \frac{i'}{2^{k+1}} \right) \left( x - \frac{i'+1}{2^{k+1}} \right)$ holds for $\displaystyle \frac{i'}{2^{k+1}} \leq x \leq \frac{i'+1}{2^{k+1}}$, where $i'\in \{ 0,1,\dots,2^{k+1}-1 \}$, completing the induction. The latter statement of Lemma \ref{lm7} easily follows by completing the square.
\end{proof}

\section{Proof of Lemma \ref{lm5}}
\label{sec9.3}
Assume that the ReLU FNN with width $N$ and depth $L$ can be written as
\begin{equation*}
\tilde{\boldsymbol{x}}_{i+1} = 
\begin{cases}
    \mathrm{ReLU}(\boldsymbol{\tilde{W}}_i \tilde{\boldsymbol{x}}_i + \boldsymbol{b}_i) \quad \text{if } i < L, \\
    \boldsymbol{\tilde{W}}_i \tilde{\boldsymbol{x}}_i + \boldsymbol{b}_i \quad \text{if } i = L,
\end{cases}
\end{equation*}
\begin{equation*}
    (i = 0, 1, \dots, L,\ \tilde{\boldsymbol{x}}_i \in \mathbb{R}^{d_i},\ \boldsymbol{\tilde{W}}_i \in \mathbb{R}^{d_i\times d_{i+1}},\ d_{L+1} = 1)
\end{equation*}
where $\tilde{\boldsymbol{x}}_0 \in \mathbb{R}^{d_0}$ and $\tilde{x}_{L+1} \in \mathbb{R}$ are the inputs and the output respectively. Without loss of generality, we can assume that the dimension of $\tilde{\boldsymbol{x}}_0, \dots, \tilde{\boldsymbol{x}}_L$ are equal to $N$ by adding 0s to each bottom. Let $\tilde{\boldsymbol{\ell}}_0 = (\tilde{\boldsymbol{x}}_0^\top, \boldsymbol{0}_N^\top)^\top \in \mathbb{R}^{2N\times 1}$, and define
\begin{equation*}
    \tilde{\boldsymbol{\ell}}_{2i+1} = \boldsymbol{W}_{2i} \cdot \mathrm{ReLU} \left(
    \begin{pmatrix}
        -\boldsymbol{I}_N & \boldsymbol{O}_N\\
        \tilde{\boldsymbol{W}}_{2i} & \boldsymbol{O}_N
    \end{pmatrix}
    \tilde{\boldsymbol{\ell}}_{2i} + 
    \begin{pmatrix}
        \boldsymbol{0}_N \\ \boldsymbol{b}_{2i}
    \end{pmatrix}
    \right),
\end{equation*}
\begin{equation*}
    \tilde{\boldsymbol{\ell}}_{2i+2} = 
    \boldsymbol{W}_{2i+1} \cdot \mathrm{ReLU} \left(
    \begin{pmatrix}
        \boldsymbol{O}_N & \tilde{\boldsymbol{W}}_{2i+1}\\
        \boldsymbol{O}_N & -\boldsymbol{I}_N
    \end{pmatrix}
    \tilde{\boldsymbol{\ell}}_{2i+1} + 
    \begin{pmatrix}
        \boldsymbol{b}_{2i} \\ \boldsymbol{0}_N
    \end{pmatrix}
    \right),
\end{equation*}
\begin{equation*}
    \mathrm{where}\ \boldsymbol{W}_i = 
    \begin{cases}
        \begin{pmatrix}
            \boldsymbol{W}_L & \boldsymbol{W}_L\\
            \boldsymbol{O}_N & \boldsymbol{O}_N
        \end{pmatrix}
        \quad \text{if}\ i = L - 1,\\
        \boldsymbol{I}_{2N}\quad \text{otherwise}.
    \end{cases}
\end{equation*}
It is easy to check that the top $N$ elements of $\tilde{\boldsymbol{\ell}}_{L}$ are equal to $\tilde{\boldsymbol{x}}_{L+1}$, and constructing $\tilde{l}_i$ for $i = 1,2,\dots,L$ can be done by the feed-forward layers of Transformers.

\bibliography{sample}
\bibliographystyle{plainnat}

\end{document}